\documentclass[11pt]{article}

\usepackage[preprint]{acl}

\usepackage{times}
\usepackage{latexsym}
\usepackage{adjustbox}

\usepackage[T1]{fontenc}

\usepackage[utf8]{inputenc}

\usepackage{microtype}
\usepackage{amsmath}
\usepackage{amssymb}
\usepackage{amsthm}
\usepackage{booktabs}
\usepackage{multirow}

\usepackage{inconsolata}

\usepackage{graphicx}

\newcommand{\Rmn}[1]{\uppercase\expandafter{\romannumeral #1\relax}}
\newtheorem{definition}{Definition}
\newtheorem{assumption}{Assumption}
\newtheorem{proposition}{Proposition}
\newtheorem{corollary}{Corollary}
\newtheorem{theorem}{Theorem}
\newtheorem{lemma}{Lemma}
\theoremstyle{remark}
\newtheorem{remark}{Remark}

\title{Model-Specific Task Similarity for Vision–Language Model Selection via Layer Conductance}

\author{
  Wei Yang \\
  School of AI and Data Science \\
  USTC \\
  \And
  Hong Xie\thanks{Corresponding author: \texttt{hongx87@ustc.edu.cn}}   
  {\bf Tao Tan} \\  
  {\bf Xin Li}   
  {\bf Defu Lian} 
  {\bf Enhong Chen} \\
  USTC \\
}

\begin{document}
\maketitle
\begin{abstract}
While open-sourced Vision-Language Models (VLMs) have proliferated, selecting the optimal pre-trained model for a specific downstream task remains challenging. Exhaustive evaluation is often infeasible due to computational constraints and data limitations in few-shot scenarios. Existing selection methods fail to fully address this: they either rely on data-intensive proxies or use symmetric textual descriptors that neglect the inherently directional and model-specific nature of transferability. To address this problem, we propose a framework that grounds model selection in the internal functional dynamics of the visual encoder. Our approach represents each task via layer-wise conductance and derives a target-conditioned block-importance distribution through entropy-regularized alignment. Building on this, we introduce Directional Conductance Divergence (DCD), an asymmetric metric that quantifies how effectively a source task covers the target’s salient functional blocks. This allows for predicting target model rankings by aggregating source-task ranks without labeled target-task evaluation. Experimental results on 48 VLMs across 21 datasets demonstrate that our method outperforms state-of-the-art baselines, achieving a 14.7\% improvement in NDCG@5 over SWAB. 
\end{abstract}

\section{Introduction}
VLMs pretrained on large-scale image--text pairs have become core engines for tasks such as visual understanding and embodied intelligence, particularly in zero-shot transfer scenarios~\citep{vlm—no1,ma_vla,vlm_survey}. With the rapid emergence of open-source VLMs spanning diverse architectures and pretraining strategies, selecting the most suitable pretrained model for a target task has become increasingly critical. Exhaustive evaluation is often infeasible, since target domains usually provide only a few or even no test data samples at all. Therefore, 
transferability estimation becomes the mainstream approach to crack this problem~\cite{Ding2024,Xue2024}.  

Transferability estimation is a central and long-standing 
problem in the transfer learning area~\cite{Ding2024,Xue2024}. 
Though many methods were proposed, 
they fall short in the VLMs selection problem.  
More specifically, data-driven proxy metrics are unstable in few-shot regimes~\cite{logme,leep}.  
Alternatively, prior task-centric formulations 
like LOVM~\cite{lovm_modelgpt} and SWAB~\cite{swab} 
estimate transferability by the embeddings of textual task descriptions. 
Being target domain data free is one major merit of these methods, 
but it also leads to the performance bottleneck.  
This is because task similarity is inherently model-specific~\cite{task2vec,model-specifc-sim}.  
This results in the \textit{data} vs. \textit{performance} dilemma.  



This paper develops an alternative approach to address the aforementioned dilemma.
Our idea is inspired by practical scenarios that 
a few unlabeled target images are usually available 
in the target task. 
Though these unlabeled target images offer relatively weak visual signals, 
it turns out that they can elicit strong signals for estimating model-specific task similarity if done properly.  
To leverage them, we ground task relatedness in model-specific internal dynamics. Concretely, we represent each task by the layer-wise contribution profile of the model's visual encoder, computed via layer conductance over coarse-grained blocks~\citep{layerconductance1}. This yields a lightweight, label-free, and inherently model-specific task representation, directly reflecting which functional blocks the target domain relies on. 


Building on the above view, we derive a target-conditioned block-importance distribution via entropy-regularized alignment~\cite{entro_ot} and propose Directional Conductance Divergence (DCD), an asymmetric metric that captures whether source-task mechanisms cover target-critical blocks. Note the asymmetric property is desirable, 
since transfer is inherently directional and model-specific: tasks that appear similar under one model's representations may diverge under another~\cite{task2vec}. We then predict target-task rankings by aggregating source-task model rankings with DCD-induced similarity weights, avoiding labeled target-task evaluation while relying on amortizable source-side computation. Under limited sampling, our approach achieves roughly a 15\% improvement in NDCG@5 and $\tau$@5 over SWAB.

The main contributions of this work are threefold. 
(\Rmn{1}) We introduce a model-specific task representation based on layer conductance of the visual encoder, which requires only a few unlabeled images from the target task. 
(\Rmn{2}) We propose Directional Conductance Divergence (DCD), a target-conditioned asymmetric transferability metric that measures whether source tasks match the target's critical functional blocks. 
(\Rmn{3}) We develop a similarity-weighted rank aggregation scheme to predict target-task model rankings without direct target-task evaluation, and validate its effectiveness across diverse VLMs and datasets.

\section{Related Work}

{\bf Vision-Language Models.} VLMs, trained via contrastive learning on large-scale image--text pairs, have become foundational to cross-modal representation learning~\cite{vlm—no1}. 
Prominent examples include CLIP~\cite{vlm—no1}, ALIGN~\cite{align}, FLAVA~\cite{flava}, Florence~\cite{florence}, and CoCa~\cite{coca}. Despite these advances, the performance of VLMs varies considerably across tasks and domains and often degrades under distribution shift~\cite{lovm_modelgpt}. Meanwhile, the rapid proliferation of open-source models, where each trained on distinct architectures and corpora, has given rise to a heterogeneous ``VLMs Zoo''~\cite{vlmzoo,open_clip,hf}. In this landscape, strong benchmark results do not guarantee downstream effectiveness, shifting the central challenge from training a single universal model to selecting the most suitable VLM for a given target task. This work investigates label-free and reliable VLM selection from the zoo without exhaustive labeled per-task evaluation.

{\bf Model Selection.} 
A key challenge in deploying pre-trained models lies in estimating their transferability. Existing methods address it via proxy metrics such as H-Score~\cite{h-score}, LogME~\cite{logme} and LEEP~\cite{leep}, 
or through representation similarity such as 
Task2Vec~\cite{task2vec} and Model Spider~\cite{model_spider}. Unsupervised approaches further attempt to predict performance using only unlabeled data signals~\cite{unsuper1}. However, these methods uniformly demand sufficient target-task data.  This demand ill-suits VLMs, which are often deployed in zero-shot or few-shot scenarios~\cite{lovm_modelgpt,swab}. Moreover, most prior work is developed for unimodal architectures and rarely accounts for VLMs' multimodal nature.

Recent paradigms decouple VLM evaluation from labeled image. LOVM~\cite{lovm_modelgpt} ranks VLMs using only textual task descriptions, while SWAB~\cite{swab} advances this via an optimal transport framework that bridges modality and capability gaps across tasks. Parallel efforts like Model Label Learning (MLL)~\cite{tan2025vision} further propose pre-assigning capability labels to VLMs to enable efficient selection and ensemble-based reuse for downstream adaptation. These methods pushed the frontier of state of the art, 
but they also have limitations worth further investigation: they ignore the practical prevalence of few-shot settings where a small set of target images is available. In this regime, data-heavy methods yield unstable estimates, whereas text-only approaches waste informative visual cues already present in the available samples.

VLMs performance hinges on both visual representation quality and vision--language alignment~\cite{vlm—no1,lawofvi}. When limited target-domain images are available, leveraging these samples to reveal performance differences among candidate models becomes essential for reliable selection—a setting that remains largely unexplored.  
Furthermore, prior task-centric formulations (LOVM~\cite{lovm_modelgpt} and SWAB~\cite{swab}) embed tasks into text space to estimate transferability. This formulation overlooks the fact that task similarity is inherently model-specific~\cite{task2vec,model-specifc-sim}: tasks that appear similar under one model’s representations may diverge substantially under another, due to differences in architectural biases, pre-training objectives, and learned features. Recent empirical analysis~\cite{s2025understanding} reinforces this by mapping the complex, non-uniform patterns of positive and negative transfer across different VLM architectures. Ignoring this model-specific nature of task relatedness therefore leads to inaccurate transferability estimation, particularly across heterogeneous VLM collections. Motivated by these observations, we propose a VLMs selection framework tailored to few-shot and weakly supervised settings, which leverages limited visual evidence while explicitly modeling task relatedness in a model-specific manner.

{\bf Layer Conductance.} 
Layer conductance is an attribution method that quantifies the contribution of intermediate-layer activations to model outputs~\cite{layerconductance1,layerconductance2}. It extends Integrated Gradients from input features to hidden representations, enabling principled attribution while satisfying axiomatic properties such as completeness.
We instantiate our framework with layer conductance because it provides block-level attribution signals for internal functional dynamics. Our framework is not inherently restricted to layer conductance, and alternative attribution or representation-analysis methods may provide complementary task representations.

\section{Unlabeled Images Elicit Similarity Signals}
\label{sec:qualitative}
This section motivates and validates the core premise of our approach that a small set of unlabeled target images can reveal reliable, model-specific signals of task relatedness, which align more closely with transferability than text-only semantic proxies.

\paragraph{Task Representations via Layer Conductance.}
We consider the setting where a few unlabeled images are available.  
A classical view in computer vision treats deep neural networks as cascaded filter banks that learn hierarchical representations through successive transformations~\cite{filter1, filter2}. Under this perspective, different layers capture distinct aspects of task-relevant information. If a task consistently relies on specific blocks (e.g., early texture-oriented stages versus later semantic stages), the distribution of layer contributions provides a compact summary of the task as perceived by the model. This intuition resonates with the task embedding paradigm of Task2Vec~\cite{task2vec}, which represents tasks through statistics of internal learning signals. We extend this to a model-specific formulation: rather than constructing a universal task graph, we characterize each task by its layer-wise dependency structure within a given VLM. Specifically, we use layer conductance~\cite{layerconductance1, layerconductance2} to extract a block-wise contribution vector from the visual encoder. We use this distribution as a model-specific summary of how a task engages internal visual blocks, providing a basis for estimating task relatedness from the model's perspective.  

\begin{figure}[t]
    \centering
    \includegraphics[width=\linewidth]{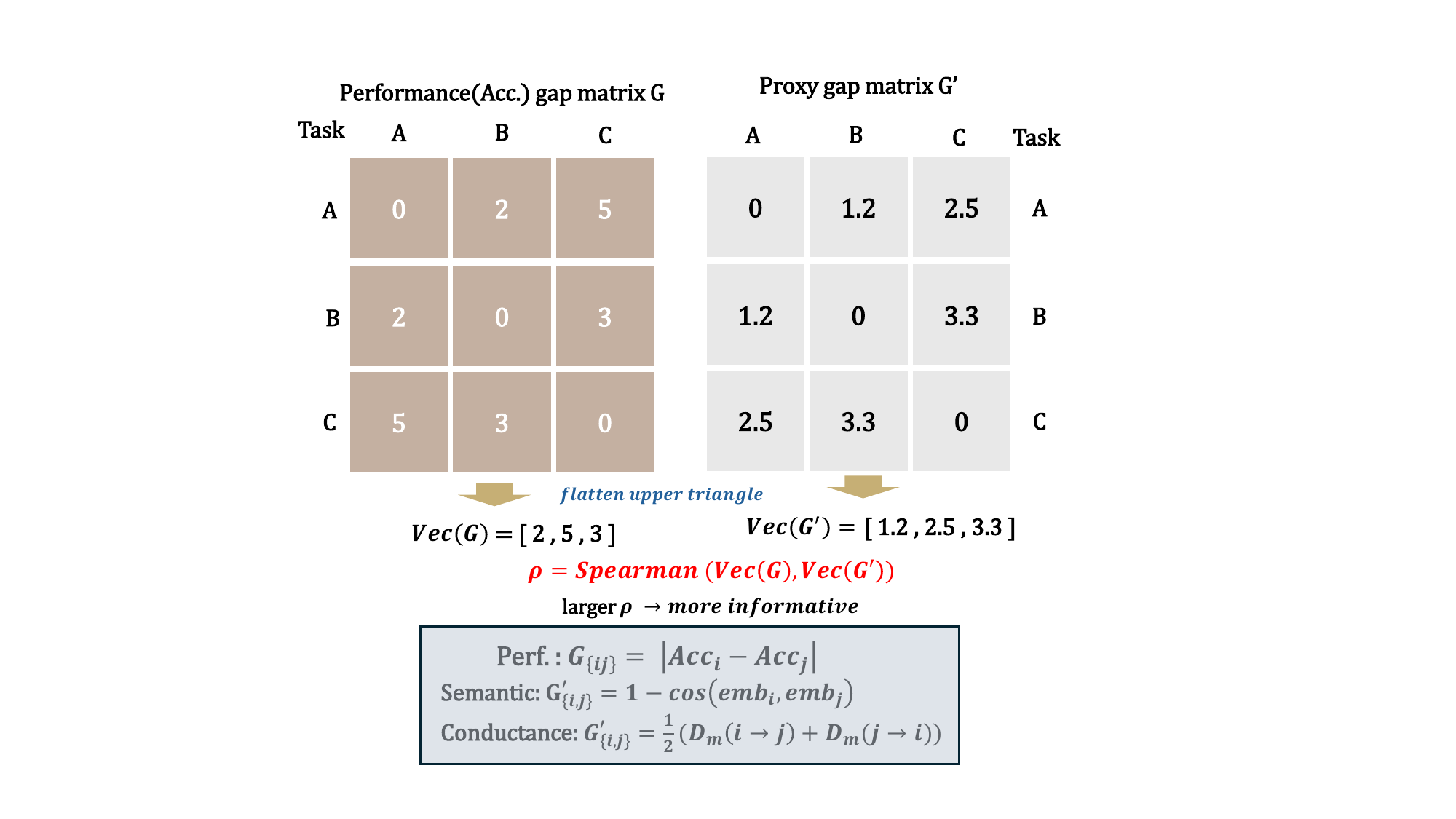}
    \caption{\textbf{How we measure proxy reliability.} 
    We construct a performance gap matrix $G$ with entries $G_{ij}=|\mathrm{Acc}_i-\mathrm{Acc}_j|$, and a proxy gap matrix $G'$ from either semantic distances or conductance-induced distances. }
    \label{fig:easy_example}
\end{figure}

\paragraph{Alignment with Performance.}
We first ask: does a similarity proxy preserve the true structure of transferability? If two tasks are highly related from a model's perspective, the performance gap between them should be small~\cite{perf_gap}. A useful proxy should therefore assign smaller distances to task pairs with smaller performance gaps. To probe this, we select five representative VLMs spanning ResNet and ViT backbones with diverse pretraining sources and compute a task--task proxy matrix using: (i) our conductance representation derived from a 25-image unlabeled set, and (ii) a semantic baseline derived from task text embeddings. Figure~\ref{fig:easy_example} illustrates these matrices and the corresponding performance gaps, with further details deferred to Appendix~\ref{ap:qi_details}.   While absolute correlations are modest, as expected under extreme supervision constraints, the conductance proxy consistently aligns more closely with ground truth than the semantic baseline.

\begin{table}[t]
    \centering
    \small
    \begin{tabular*}{\columnwidth}{@{\extracolsep{\fill}} lccc}
        \toprule
        \textbf{Model} & $\rho_{\text{cond}}$ (Ours) & $\rho_{\text{sem}}$ & $\Delta$ \\
        \midrule
        RN101-OpenAI    & \textbf{0.128} & -0.127 & \textbf{+0.255} \\
        RN50x4-OpenAI   & \textbf{0.131} & -0.095 & \textbf{+0.227} \\
        ViT-B/32-OpenAI & \textbf{0.011} & -0.051 & +0.062 \\
        RN50-CC12M      & \textbf{0.023} & 0.002  & +0.020 \\
        ViT-B/16-DFN2B  & \textbf{0.031} & 0.026  & +0.006 \\
        \bottomrule
    \end{tabular*}
    \caption{\textbf{Alignment with Performance Gap.} Our method ($\rho_{\text{cond}}$) consistently improves over semantic proxies ($\rho_{\text{sem}}$), especially for ResNets where semantic correlation can be negative.}
    \label{tab:correlation_comparison}
\end{table}

Table~\ref{tab:correlation_comparison} quantifies this effect via Spearman correlation between the proxy-derived rankings and the performance gap ordering. Notably, for ResNet-based models (RN101-OpenAI and RN50x4-OpenAI), the semantic proxy can even become negatively correlated with ground truth, suggesting that text-only descriptors may invert the transferability signal under certain architectures. In contrast, the conductance proxy remains positively correlated across all five models, confirming its reliability as a selection signal.

\begin{figure}[t]
    \centering
    \includegraphics[width=0.95\columnwidth]{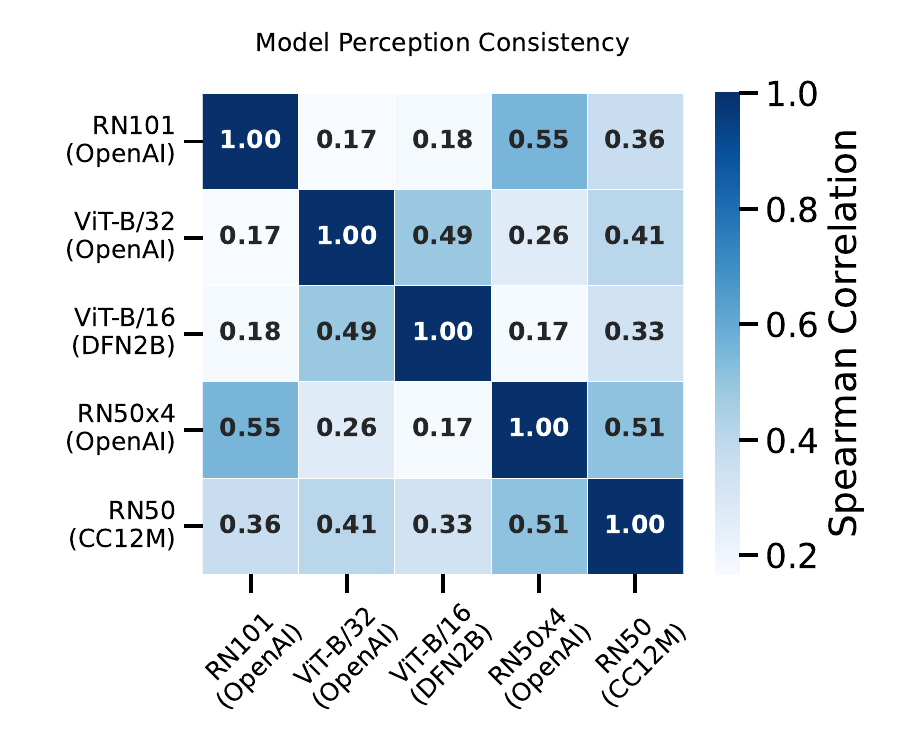}
    \caption{\textbf{Model-to-Model Correlation Matrix in Task Perception.}  High correlations within architectural families and lower correlations across them suggest that conductance-induced task perception is model-dependent, while also indicating that architecture and pretraining can influence the induced task geometry.}
    \label{fig:conductance_heatmap}
\end{figure}

\paragraph{The Model-Specific Nature of Task Perception.}
A second hypothesis central to our method is that task relatedness is model-specific: two tasks perceived as similar by one model may be distinct to another due to differences in architecture, pretraining objectives, and learned features~\cite{task2vec, model-specifc-sim}. If true, the task perception induced by conductance should vary across models. To test this, we compare conductance-induced task distance vectors. Figure~\ref{fig:conductance_heatmap} reports the resulting model-to-model correlation matrix. We observe clear architectural clustering: models within the same backbone family exhibit stronger agreement in relative task distances, while cross-architecture agreement drops substantially. This pattern suggests that conductance-induced task perception is model-dependent, while also indicating that architecture and pretraining can influence the induced task geometry. Rather than treating this clustering as sufficient evidence of task relatedness, we use it as motivation for a model-specific formulation in Section~\ref{sec:method}, where target-conditioned importance and directional divergence are computed individually for each candidate model.

\section{Model Selection Via Unlabeled Image}
\label{sec:method}

Inspired by the insights of Section \ref{sec:qualitative}, 
this section designs a method to select appropriate 
VLMs for downstream tasks utilizing 
a few unlabeled images from the target task. 
\subsection{Preliminary}
\label{sec:preliminary}

{\bf The Model.}
Let $\mathcal{M}$ be a finite set of candidate VLMs and $\mathcal{T}$ a set of downstream tasks. For a model $m \in \mathcal{M}$, we decompose its visual encoder into $d_m$ coarse-grained functional blocks (e.g., stages in ResNet or transformer blocks in ViT), indexed by $i \in \{1, \dots, d_m\}$. The number of blocks $d_m$ may vary across models.
Each task $\tau \in \mathcal{T}$, is associated with a small unlabeled image set $\mathcal{X}_{\tau}=\{x_j\}_{j=1}^{N_\tau}$ sampled i.i.d.\ from its data distribution $P_\tau$. In the leave-one-out protocol, the held-out target task $\tau$ provides $N_{\mathrm{tgt}}$ unlabeled images, while each source task $\sigma \in \mathcal{T}\setminus{\tau}$ provides $N_{\mathrm{src}}$ images for constructing task representations (Eq.~\ref{eq:task_representation}).

\paragraph{Layer Conductance Vector.}
To quantify block-level contributions to the scalar probe, we instantiate our framework with \emph{layer conductance}~\cite{layerconductance1}.
Given an input image $x \sim P_\tau$ for task $\tau \in \mathcal{T}$, we define the layer conductance vector $g_m(x) \in \mathbb{R}^{d_m}$ as the concatenation of block-wise attribution scores:
\begin{equation}
    g_m(x) = \left[ g_m^{(1)}(x), \dots, g_m^{(d_m)}(x) \right]^\top .
\end{equation}
Layer conductance requires a scalar objective. Let $f_m(x)\in\mathbb{R}^{p}$ denote the visual embedding produced by model $m$. In the unlabeled setting, we use the $\ell_2$ norm of the image embedding as a simple label-free scalar probe for inducing conductance profiles:
\begin{equation}
    F_m(x) \triangleq \|f_m(x)\|_2 .
\end{equation}
This choice is architecture-agnostic and aggregates information across embedding dimensions without requiring labels or class semantics.
In practice, Layer conductance produces an attribution tensor for each block output; we summarize it by the mean absolute attribution over activation dimensions to obtain a scalar score $g_m^{(i)}(x)\in\mathbb{R}_{\ge 0}$.
The resulting $g_m(x)$ maps $x$ into a model-specific layer contribution space. More details are in Appendix~\ref{layerconduct}.
Note that $F_m$ is used only as an unsupervised scalar probe to extract relative block-importance patterns, rather than as a direct proxy for downstream accuracy or transferability.

\paragraph{Task Representation.}
Since a task is characterized by its underlying distribution rather than individual samples, we define the task representation $v_{m,\tau}$ as the expectation of the conductance vector over $P_\tau$:
\begin{equation}
    v_{m,\tau} \triangleq \mathbb{E}_{x \sim P_\tau}[g_m(x)] \in \mathbb{R}^{d_m}.
\end{equation}
Since $P_\tau$ is inaccessible in practice, we approximate $v_{m,\tau}$ by an empirical estimate $\hat{v}_{m,\tau}$ computed from a finite unlabeled sample set $\mathcal{X}_\tau=\{x_j\}_{j=1}^{N}$ drawn i.i.d.\ from $P_\tau$:
\begin{equation}
\label{eq:task_representation}
    \hat{v}_{m,\tau} \triangleq \frac{1}{N} \sum_{j=1}^{N} g_m(x_j).
\end{equation}
$N$ denotes the number of available unlabeled images for task $\tau$, which equals $N_{\mathrm{src}}$ for source tasks and $N_{\mathrm{tgt}}$ for the held-out target task.

This vector captures the layer-wise dependency structure of task $\tau$ on the internal blocks of model $m$, and serves as the foundation for the subsequent similarity analysis. For notational convenience, we use $v_{m,\tau}^{(i)}$ to denote the $i$-th component of $v_{m,\tau}$.

\subsection{Target-Conditioned Importance Distribution}
\label{sec:importance_distribution}

Given the task representation $v_{m,\tau}$, we aim to identify which blocks are most critical for the target task $\tau$. Treating all blocks uniformly neglects the hierarchical structure of deep networks, where different blocks capture features of varying semantic complexity. We therefore seek an importance distribution $\alpha_{m,\tau} \in \Delta^{d_m-1}$ over the $d_m$ blocks that emphasizes the most relevant blocks while preserving robustness.

\paragraph{Normalization.}
To eliminate scale variations across models and ensure numerical stability, we define the normalized representation $u_{m,\tau} \in \mathbb{R}^{d_m}$ as:
\begin{equation}
    u_{m,\tau} \triangleq \frac{v_{m,\tau}}{\max(\|v_{m,\tau}\|_2, \epsilon)},
\end{equation}
where $\epsilon > 0$ is a small constant for numerical stability. Each component $u_{m,\tau}^{(i)}$ reflects the relative contribution of block $i$ to task $\tau$.

\paragraph{Derivation via Entropy-Regularized Alignment.}
We require the importance distribution $\alpha_{m,\tau}$ to satisfy two desiderata: (\Rmn{1}) higher weights for blocks with larger normalized conductance values (alignment), and (\Rmn{2}) sufficient dispersion to avoid over-reliance on any single block (robustness).

Following these principles, we define $\alpha_{m,\tau}$ as the solution to an entropy-regularized alignment objective. Specifically, we seek $\alpha \in \Delta^{d_m-1}$ that maximizes a trade-off between alignment with $u_{m,\tau}$ and Shannon entropy $H(\alpha) = -\sum_i \alpha_i \log \alpha_i$:
\begin{equation}
    \alpha_{m,\tau}
    = \mathop{\arg\max}_{\alpha \in \Delta^{d_m-1}}
    \left(
        \langle \alpha, u_{m,\tau} \rangle
        + \frac{1}{\eta} H(\alpha)
    \right),
\end{equation}
where $\eta > 0$ is a temperature parameter controlling distribution sharpness.

This objective admits a closed-form solution given by the softmax function, which can be easily verified (see Appendix~\ref{ap:importance_weight} for a detailed derivation):
\begin{equation}
    \alpha_{m,\tau}^{(i)}
    = \frac{\exp\!\left(\eta \, u_{m,\tau}^{(i)}\right)}
    {\sum_{j=1}^{d_m} \exp\!\left(\eta \, u_{m,\tau}^{(j)}\right)}.
\end{equation}

The resulting distribution depends solely on the target task representation $v_{m,\tau}$ and re-weights blocks according to task demands. The parameter $\eta$ modulates the degree of concentration: large $\eta$ focuses mass on dominant blocks, while small $\eta$ yields a smoother weighting.

\subsection{Directional Conductance Divergence}
\label{sec:dcd}

We define a metric to quantify the transferability from a source task $\sigma$ to the target task $\tau$. Unlike conventional symmetric metrics (e.g., Euclidean or Cosine distance), effective transfer learning hinges on directional coverage~\cite{assym}: a source task is valuable primarily if it encompasses the internal mechanisms critical to the target.

To capture this asymmetry, we propose Directional Conductance Divergence (DCD). We first define the element-wise relative deviation of the source task $\sigma$ with respect to the target $\tau$ at block $i$:
\begin{equation}
    \delta_m^{(i)}(\tau, \sigma) \triangleq 
    \frac{\left| v_{m,\tau}^{(i)} - v_{m,\sigma}^{(i)} \right|}{\left|v_{m,\tau}^{(i)}\right| + \epsilon},
\end{equation}
where $\epsilon > 0$ is a small constant for numerical stability. The denominator normalizes by the target activation magnitude, so the divergence is interpreted as a relative error measured in the target's context.

We then define DCD as the expected relative deviation under the target's importance distribution:
\begin{equation*}
\begin{aligned}
    D_m(\tau \to \sigma)
    &\triangleq \sum_{i=1}^{d_m} \alpha_{m,\tau}^{(i)} \cdot \delta_m^{(i)}(\tau, \sigma) \\
    &= \mathbb{E}_{i \sim \alpha_{m,\tau}}\!\left[ \delta_m^{(i)}(\tau, \sigma) \right]
\end{aligned}
\end{equation*}
Intuitively, $D_m(\tau \to \sigma)$ measures how well the source task $\sigma$ covers the target-critical mechanisms emphasized by $\alpha_{m,\tau}$: smaller values indicate better coverage.

\paragraph{From Divergence to Similarity}
The DCD $D_m(\tau \to \sigma)$ is a nonnegative, target-conditioned divergence. To aggregate evidence across multiple candidate sources, we convert DCD values into a normalized similarity distribution over the source-task set.

Let $\mathcal{S}_\tau \subseteq \mathcal{T}\setminus\{\tau\}$ denote the set of available source tasks for target $\tau$.
For a fixed model $m$, we define a softmin-based similarity score:
\begin{equation}
    s_m(\sigma \mid \tau) \triangleq e^{-\gamma ( D_m(\tau \to \sigma) - \min_{\sigma' \in \mathcal{S}_\tau} D_m(\tau \to \sigma') )}
\end{equation}
where $\gamma > 0$ is a temperature parameter modulating the sharpness of the distribution. Subtracting the minimum DCD provides a numerically stable baseline shift without changing the relative ordering across $\sigma$.

We normalize the similarity scores to obtain a probability distribution over source tasks:
\begin{equation}
    p_m(\sigma \mid \tau)
    \triangleq
    \frac{s_m(\sigma \mid \tau)}{\sum_{\sigma' \in \mathcal{S}_\tau} s_m(\sigma' \mid \tau)}
\end{equation}
Thus, $p_m(\sigma \mid \tau)$ assigns larger mass to source tasks with smaller DCD, and the distribution becomes more concentrated as $\gamma$ increases.

\paragraph{Ranking via Similarity Aggregation}
Given the DCD-induced weights $\{p_m(\sigma \mid \tau)\}_{\sigma \in \mathcal{S}_\tau}$, we estimate how model $m$ would rank on the target task $\tau$.

Assume that for each model $m$ and source task $\sigma \in \mathcal{S}_\tau$, we have access to its performance-based rank $R_m(\sigma)$ among all candidate models (smaller is better). That is, $R_m(\sigma)$ is obtained by sorting models by their ground-truth performance on $\sigma$ and taking the resulting rank value.
We then predict the target-task rank of model $m$ via a similarity-weighted average of source-task ranks:
\begin{equation}
    \widehat{R}_m(\tau)
    \triangleq
    \sum_{\sigma \in \mathcal{S}_\tau} p_m(\sigma \mid \tau)\cdot R_m(\sigma)
\end{equation}
Finally, given a candidate model set $\mathcal{M}$, we obtain the estimated ranking for target task $\tau$ by sorting models in ascending order of $\widehat{R}_m(\tau)$ (smaller is better):
\begin{equation}
    \widehat{\pi}_\tau
    \triangleq
    \mathrm{argsort}_{m \in \mathcal{M}} \, \widehat{R}_m(\tau)
\end{equation}
This ranking serves as the basis for model selection and is evaluated against ground-truth rankings in our experiments.
The theoretical motivation for asymmetry and the full derivations are provided in Appendix~\ref{sec:theory}.

\section{Experiments}
\subsection{Experimental Setup}
\paragraph{Datasets and Model Zoo.}
Building upon the protocols established in the LOVM benchmark~\cite{lovm_modelgpt} and the expansion strategies adopted by SWAB~\cite{swab}, we establish our experimental setup using their publicly available datasets and model repositories. To facilitate a more comprehensive evaluation, we augment the original model zoo with additional representative architectures, including EVA01-g-14~\cite{eva} and nllb-clip-base~\cite{nlibclip}, to expand the collection to a total of 48 models. This extensive model zoo encapsulates a broad spectrum of architectural designs, pre-training corpora, and training paradigms. Detailed information on the datasets and models is provided in Appendix~\ref{ap:modelanddataset}.

\paragraph{Evaluation Protocol.}
To ensure a rigorous and equitable comparison, we adhere to the leave-one-out evaluation protocol employed by ModelGPT and SWAB~\cite{swab,lovm_modelgpt}. Specifically, we iteratively designate one of the 21 datasets as the \textit{held-out} target domain, while the remaining datasets constitute the source pool. All datasets are obtained from official repositories and evaluated using standard test splits.

To establish ground-truth rankings, we leverage the text templates provided by LOVM~\cite{lovm_modelgpt} to construct zero-shot classifiers and measure the Top-1 accuracy of each VLM on the respective test splits. To evaluate robustness under low-shot sampling, we repeat each experiment over 10 independent random draws of unlabeled images. In each run, we resample the target and source images according to the specified $N_{\text{tgt}}$ and $N_{\text{src}}$, recompute conductance-based task representations, and obtain the final ranking. We report the mean and standard deviation across runs.
\paragraph{Metrics.}
Following the baseline evaluation protocol~\cite{lovm_modelgpt,swab}, we report top-$k$ diagnostics: NDCG@5, Kendall's $\tau$@5, NDCG@3, and NDCG@7. 
NDCG@5 is more discriminative than Top-5 Recall used in prior studies, as it captures graded relevance and relative ordering rather than set overlap only. 
We report metric definitions in Appendix~\ref{ap:metrics} and standard NDCG@5/Recall@1 in Appendix~\ref{ap:standard_metrics}.
Additional evaluation results, including hyperparameter sensitivity, sampling efficiency, model clustering, and probe variation, are reported in Appendix~\ref{ap:additional_results}.

\subsection{Main Performance Comparison}
\paragraph{Baselines.} We benchmark our method against several representative baselines from the model selection literature. Specifically, we consider two ranking-based approaches: INB, which ranks VLMs based on their ImageNet performance, and AvgRank, which calculates the mean rank across the remaining 20 source datasets under the leave-one-out protocol. We also include GroupRank, a coarse domain/task grouping baseline. For each target dataset, GroupRank selects source datasets that share the same domain/task group as annotated in Table~\ref{tab:datasets}, and predicts each model's target rank by averaging its ground-truth ranks over these grouped source datasets. Furthermore, we compare our approach with two learning-based baselines: ModelGPT~\cite{lovm_modelgpt} and SWAB~\cite{swab}, the current state-of-the-art method.

\begin{table*}[t]
  \centering
  \small
  \begin{tabular*}{\textwidth}{@{\extracolsep{\fill}} lcccccc}
    \toprule
    \textbf{Metric} & \textbf{INB} & \textbf{AvgRank} & \textbf{ModelGPT} & \textbf{SWAB} & \textbf{GroupRank} & \textbf{Ours} \\
    \midrule
    NDCG@5 $\uparrow$       & 0.539 & 0.524 & 0.544 & \underline{0.616} & 0.609 & \textbf{0.707} \\
    $\tau$@5 $\uparrow$     & 0.268 & 0.254 & 0.267 & \underline{0.318} & 0.270 & \textbf{0.365} \\
    \midrule
    Sum $\uparrow$          & 0.807 & 0.778 & 0.811 & \underline{0.934} & 0.879 & \textbf{1.072} \\
    \midrule
    \multicolumn{7}{l}{\emph{Other metrics:}} \\
    \addlinespace[1pt]
    \hspace{1em}NDCG@3 $\uparrow$ & 0.384 & 0.386 & 0.386 & \underline{0.393} & 0.366 & \textbf{0.402} \\
    \hspace{1em}NDCG@7 $\uparrow$ & 0.651 & 0.699 & 0.702 & 0.691 & \underline{0.741} & \textbf{0.752} \\
    \bottomrule
  \end{tabular*}
  \caption{\textbf{Main results on VLM selection.} 
NDCG@5 and $\tau$@5 are averaged over 10 independent random low-shot sampling runs for stochastic methods.
Our method is evaluated in the lightweight regime with $N_{\text{tgt}}{=}1$ target image and $N_{\text{src}}{=}25$ source images per task. 
The corresponding distribution over 10 runs is provided in Appendix~\ref{ap:complete_results}. 
Best in \textbf{bold}, second-best \underline{underlined}.}
  \label{tab:main_results}
\end{table*}

\paragraph{Results.} As shown in Table~\ref{tab:main_results}, we make the following observations.
(\Rmn{1}) Under the baseline evaluation protocol, our method achieves the strongest overall performance among the compared methods, with the NDCG@5 of 0.707 and the highest $\tau$@5 of 0.365. (\Rmn{2}) The performance gap between our method and SWAB reveals the limitation of relying solely on semantic similarity for model selection. Meanwhile, the gap between GroupRank and DCD suggests that manually defined domain/task groups are informative but insufficient to explain the performance gains. By explicitly accounting for model-specific conductance patterns, our attribution-based approach more effectively captures target-conditioned model--task alignment. (\Rmn{3}). The larger gain in NDCG@5 relative to $\tau$@5 indicates that our method is particularly effective at identifying and ranking the best-matched model at the top position, which is crucial for practical deployment.Taken together, these results suggest that effective VLM selection requires a model-specific characterization of task relevance. By grounding similarity in model-aware attribution signals, our approach achieves more accurate model-task alignment with only a few unlabeled target samples and reusable source-task representations.


\subsection{Ablation Study}
\paragraph{Baselines.}
We ablate two key components of DCD: the similarity formulation and the block-weighting strategy. 
For similarity formulation, we replace DCD with two symmetric alternatives, Cosine similarity and Soft-KL/JSD, while keeping the conductance representation and rank aggregation pipeline unchanged. 
For block weighting, we compare our target-conditioned weighting against Uniform weighting and Target-norm weighting under the same directional relative-deviation distance. 
These variants isolate whether the gains come from directional similarity modeling, target-conditioned block importance, or their combination. 
Detailed definitions of all ablation variants are provided in Appendix~\ref{ap:ablation_details}.

\paragraph{Results.}
Table~\ref{tab:ablation} reports the results of both ablation groups.
(\Rmn{1}) In the similarity formulation group, DCD consistently outperforms both symmetric alternatives. Compared with the stronger symmetric baseline, DCD improves NDCG@5 from 0.640 to 0.707 and $\tau$@5 from 0.302 to 0.365, confirming the importance of directional source-to-target coverage.
(\Rmn{2}) Cosine similarity and Soft-KL/JSD perform nearly identically, suggesting that once the similarity is forced to be symmetric, the choice between geometric and distributional comparison has limited impact.
(\Rmn{3}) In the block-weighting group, Uniform weighting already provides a non-trivial signal, while Target-norm weighting improves NDCG@5 from 0.625 to 0.670. 
The full DCD formulation achieves the best overall performance, especially on $\tau$@5, indicating that target-conditioned block importance and directional relative deviation are both important for robust model ranking.

\begin{table}[t]
  \centering
  \small
  \begin{tabular*}{\columnwidth}{@{\extracolsep{\fill}} l ccc}
    \toprule
    \textbf{Variant} & NDCG@5 $\uparrow$ & $\tau$@5 $\uparrow$ & Sum $\uparrow$ \\
    \midrule
    \multicolumn{4}{l}{\emph{Similarity formulation}} \\
    \addlinespace[1pt]
    \hspace{1em}Cosine (Sym.)        & 0.639 & 0.301 & 0.940 \\
    \hspace{1em}Soft-KL/JSD (Sym.)   & 0.640 & 0.302 & 0.942 \\
    \midrule
    \multicolumn{4}{l}{\emph{Weighting strategy}} \\
    \addlinespace[1pt]
    \hspace{1em}Uniform weighting        & 0.625 & 0.254 & 0.879 \\
    \hspace{1em}Target-norm weighting   & 0.670 & 0.243 & 0.913 \\
    \midrule
   \textbf{DCD (Ours)} & \textbf{0.707} & \textbf{0.365} & \textbf{1.072} \\
    \bottomrule
  \end{tabular*}
  \caption{\textbf{Ablation study.}
We evaluate two groups of variants: similarity formulation and block-weighting strategy. 
Cosine and Soft-KL/JSD replace DCD with symmetric similarity measures, while Uniform weighting and Target-norm weighting replace the target-conditioned block weighting used in DCD. 
DCD combines directional relative deviation with target-conditioned block weighting. 
Detailed variant definitions are provided in Appendix~\ref{ap:ablation_details}.}
  \label{tab:ablation}
\end{table}

\section{Conclusion}

We introduced a model-specific framework for few-shot vision-language model selection that characterizes tasks through layer-wise contributions in the visual encoder and quantifies transferability using Directional Conductance Divergence (DCD), an asymmetric metric measuring target-critical feature coverage. Empirical evaluation across 48 VLMs and 21 datasets shows significant improvements over existing baselines, achieving NDCG@5 of 0.707 and $\tau$@5 of 0.365, representing gains of approximately 15\% over SWAB, while theoretical analysis supports the necessity of asymmetry under coverage semantics.

Future work can improve both the generality and practicality of our framework. We will investigate architecture-aware strategies to align functional blocks across heterogeneous backbones, and extend attribution signals to vision-language interaction for alignment-sensitive tasks. We also plan to strengthen performance under extreme sampling through adaptive anchor selection and uncertainty-aware aggregation. Finally, incorporating robustness and fairness criteria would enable more reliable deployment across diverse scenarios.
\section*{Limitations}

First, our evaluation mainly relies on top-$k$ ranking diagnostics, including intersection-based NDCG@k and Kendall's $\tau$@k. While these metrics are useful for comparing the relative ordering of commonly selected top models, they do not exhaustively capture ranking errors over the full model zoo. We therefore report standard full-ranking metrics in Appendix~\ref{ap:standard_metrics} and treat the intersection-based metrics as complementary diagnostics. Moreover, our current instantiation uses layer conductance and the embedding norm as a label-free scalar probe; although this design avoids labels and text prompts and is suitable for low-supervision settings, alternative attribution methods, scalar probes, or vision--language alignment objectives may further improve task representation.

Second, conductance representations are inevitably affected by model architecture and pretraining. Although the GroupRank baseline and model-clustering analysis suggest that coarse task/domain groups or architecture identity alone cannot fully explain the observed gains, fully disentangling task-specific functional structure from architecture-induced effects remains future work. In addition, conductance extraction is more computationally expensive than a standard forward-only pass. Source-side representations can be precomputed and reused, but target-side attribution over a large model zoo remains the main practical bottleneck, motivating more efficient attribution extraction and architecture-aware caching strategies.




\bibliography{custom}

\appendix

\section{Layer Conductance's Detail}
\label{layerconduct}

\paragraph{Intuition.}
Layer conductance quantifies the contribution of individual neurons to the final prediction by measuring the flow of attribution through them. A neuron with high conductance contributes more strongly to the scalar output along the attribution path, suggesting a larger role in the corresponding forward computation.

\paragraph{Background and notation.}
Let $F:\mathbb{R}^n\rightarrow\mathbb{R}$ denote the scalar model output used for attribution, with input $x\in\mathbb{R}^n$ and baseline $x'\in\mathbb{R}^n$.
We consider the straight-line path $\gamma(\alpha)=x' + \alpha(x-x')$ for $\alpha\in[0,1]$.

\paragraph{Scalar probe used in this paper.}
For a VLM $m$, let $f_m(x)\in\mathbb{R}^{p}$ be its image embedding. We define
\begin{equation}
    F_m(x) \triangleq \|f_m(x)\|_2 ,
\end{equation}
and compute layer conductance with respect to $F_m(x)$. This serves as a simple label-free scalar probe that does not require labels, text prompts, or class semantics, and aggregates attribution across embedding dimensions.
Unless otherwise specified, we use the default settings in Captum~\cite{captum}.

\paragraph{Integrated Gradients.}
Integrated Gradients (IG) attributes the prediction to input features. For the $i$-th feature,
\begin{equation}
\mathrm{IG}_i(x) := (x_i-x_i') \int_{0}^{1}\frac{\partial F(\gamma(\alpha))}{\partial x_i}\, d\alpha .
\end{equation}

\paragraph{Layer conductance formulation.}
For a hidden neuron $y$ (scalar activation), layer conductance extends IG by decomposing attribution flow through $y$. The conductance of $y$ with respect to input feature $i$ is
\begin{equation}
\mathrm{Cond}^{y}_i(x) := (x_i-x_i') \int_{0}^{1}
\frac{\partial F(\gamma(\alpha))}{\partial y}
\frac{\partial y}{\partial x_i}\, d\alpha .
\end{equation}
The total conductance of $y$ aggregates contributions from all inputs:
\begin{equation}
\label{eq:3}
\mathrm{Cond}^{y}(x) := \sum_{i}(x_i-x_i') \int_{0}^{1}
\frac{\partial F(\gamma(\alpha))}{\partial y}
\frac{\partial y}{\partial x_i}\, d\alpha .
\end{equation}
For a layer $h(x)\in\mathbb{R}^m$ with neurons $\{y_j\}_{j=1}^{m}$, layer conductance returns $\{\mathrm{Cond}^{y_j}(x)\}_{j=1}^{m}$.
It satisfies completeness: $\sum_{j=1}^{m}\mathrm{Cond}^{y_j}(x) = F(x)-F(x')$.

\paragraph{Efficient computation.}
Direct evaluation of Eq.~\ref{eq:3} requires computing $\frac{\partial y}{\partial x_i}$ for all input dimensions, which is costly.
By recognizing the equivalence between total conductance and path integrated gradients on hidden activations, we rewrite
\begin{equation}
\mathrm{Cond}^{y}(x)
=
\int_{0}^{1}
\frac{\partial F(\gamma(\alpha))}{\partial \gamma_y(\alpha)}
\frac{\partial \gamma_y(\alpha)}{\partial \alpha}\, d\alpha ,
\end{equation}
where $\gamma_y(\alpha)=y(\gamma(\alpha))$ is the activation of $y$ along the path.
We approximate the integral via Riemann sum. Let $x^{(k)} = x' + \frac{k}{n}(x-x')$ and $y^{(k)} = y(x^{(k)})$ for $k=0,\dots,n$. Then
\begin{equation}
\label{eq:5}
\mathrm{Cond}^{y}(x)
\approx
\sum_{k=1}^{n}
\frac{\partial F(x^{(k)})}{\partial y^{(k)}}
\left(y^{(k)} - y^{(k-1)}\right).
\end{equation}
This formulation avoids computing input-level gradients and scales efficiently to high-dimensional settings.

\paragraph{Block score aggregation.}
Captum returns a tensor-valued conductance attribution for the chosen block output~\cite{captum}. We convert it to a scalar block score by taking the mean absolute attribution over all tensor entries:
\begin{equation}
    g_m^{(i)}(x) \triangleq \mathrm{Mean}\!\left(\left|\mathrm{Cond}_m^{(i)}(x)\right|\right),
\end{equation}
where $\mathrm{Cond}_m^{(i)}(x)$ denotes the conductance attribution tensor for block $i$.
We then form $g_m(x)=[g_m^{(1)}(x),\ldots,g_m^{(d_m)}(x)]^\top$, and compute the task representation by averaging $g_m(x)$ over unlabeled samples.

\paragraph{Implementation.}
We use PyTorch Captum ({captum.attr.LayerConductance}), which implements the efficient computation in Eq.~\ref{eq:5}.
For detailed derivations and theoretical properties, see \citep{layerconductance1,captum}.

\section{Details regarding Section~\ref{sec:qualitative}}
\label{ap:qi_details}
We analyze five representative VLMs (ResNet and ViT backbones; diverse pre-training sources) on the same $|\mathcal{T}|{=}21$ downstream tasks. 
For each task $\tau$, we sample $N_{\text{src}}{=}25$ unlabeled images and compute conductance-based task representations $\hat{v}_{m,\tau}$ following Section~\ref{sec:preliminary}. We use $\eta{=}2.5$ and $\epsilon{=}10^{-8}$ throughout.

\paragraph{Performance gap matrix (Figure~\ref{fig:easy_example}, left).}
For each model $m$, let $a_m(\tau)$ be its benchmark score on task $\tau$ (e.g., accuracy; for non-accuracy metrics we use the task's primary scalar score). We define the ground-truth task-pair performance gap matrix
\begin{equation}
G_m(\tau,\sigma)\ \triangleq\ \big|a_m(\tau)-a_m(\sigma)\big|.
\label{eq:app_gt_gap}
\end{equation}

\paragraph{Proxy gap matrix (Figure~\ref{fig:easy_example}, right).}
\emph{Semantic:} we embed each task $\tau$ using model $m$'s text encoder (averaging normalized text features over class-name prompts \texttt{`A photo of \{c\}.'}) to obtain $e_m(\tau)$, and set
\begin{equation}
G^{\text{sem}}_m(\tau,\sigma)\ \triangleq\ 1-\cos\!\big(e_m(\tau),e_m(\sigma)\big).
\label{eq:app_sem_gap}
\end{equation}
\emph{Conductance:} we compute the target-conditioned importance $\alpha_{m,\tau}$ (Section~\ref{sec:importance_distribution}) and directional divergence (Section~\ref{sec:dcd})
\begin{equation}
D_m(\tau\!\to\!\sigma)\ \triangleq\ \sum_{i=1}^{d_m}\alpha_{m,\tau}^{(i)}\cdot
\frac{\big|\hat{v}_{m,\tau}^{(i)}-\hat{v}_{m,\sigma}^{(i)}\big|}{\big|\hat{v}_{m,\tau}^{(i)}\big|+\epsilon},
\label{eq:app_dcd}
\end{equation}
then symmetrize it to match the undirected proxy gap in Figure~\ref{fig:easy_example}:
\begin{equation}
G^{\text{cond}}_m(\tau,\sigma)\ \triangleq\ \tfrac{1}{2}\!\left(D_m(\tau\!\to\!\sigma)+D_m(\sigma\!\to\!\tau)\right).
\label{eq:app_cond_gap}
\end{equation}

\paragraph{Proxy reliability (Table~\ref{tab:correlation_comparison}).}
Let $\mathrm{Vec}(\cdot)$ flatten the upper-triangular entries excluding the diagonal (Figure~\ref{fig:easy_example}). For each model $m$, we compute
\begin{equation}
\begin{aligned}
\rho_{\text{cond}}(m)
&= \mathrm{Spearman}\!\left(\mathrm{Vec}(G_m),\mathrm{Vec}(G^{\text{cond}}_m)\right), \\
\rho_{\text{sem}}(m)
&= \mathrm{Spearman}\!\left(\mathrm{Vec}(G_m),\mathrm{Vec}(G^{\text{sem}}_m)\right).
\end{aligned}
\label{eq:app_spearman}
\end{equation}

\paragraph{Model-specific task perception (Figure~\ref{fig:conductance_heatmap}).}
For any two models $(m_1,m_2)$, we measure agreement in conductance-induced task distances by
\begin{equation}
\begin{aligned}
\mathrm{Corr}(m_1,m_2)
&= \mathrm{Spearman}\!\left(
\mathrm{Vec}(G^{\text{cond}}_{m_1}), \right. \\
&\qquad\left. \mathrm{Vec}(G^{\text{cond}}_{m_2})\right).
\end{aligned}
\label{eq:app_modelcorr}
\end{equation}
and stack $\mathrm{Corr}(m_1,m_2)$ into the model--model correlation matrix.

\section{Theoretical Analysis and Derivations}
\label{sec:theory}

Our method adopts model-specific task representations and a directional transfer distance, contrasting with prior symmetric approaches~\citep{lovm_modelgpt,swab}. This appendix provides the theoretical motivation for asymmetry under a target-conditioned coverage view of transfer, derives the entropy-regularized target-conditioned importance distribution, and proves that Directional Conductance Divergence (DCD) is a smooth relaxation of an idealized set-restricted formulation.

\subsection{Coverage Semantics and Asymmetry}

\begin{definition}[Salient Set]
\label{def:salient-set}
Let $\mathbf{u}_{m,\tau} \in \mathbb{R}^{d_m}$ be the normalized representation of task $\tau$ on model $m$. Let $\rho_k(\mathbf{u}_{m,\tau})$ denote the $k$-th largest value in $\mathbf{u}_{m,\tau}$. The \emph{salient set} of task $\tau$ is defined as
\begin{equation}
    \mathcal{S}_{m,\tau}^{(k)} \triangleq 
    \left\{ i \in \{1,\dots,d_m\} \;\middle|\; 
    u_{m,\tau}^{(i)} \ge \rho_k(\mathbf{u}_{m,\tau}) \right\}. 
\end{equation}
This set represents the top-$k$ functional blocks on which task $\tau$ relies most heavily; for an appropriate choice of $k$, it serves as a proxy for the task-critical blocks.
\end{definition}

\paragraph{Coverage Semantics for Model Selection.}
We argue that transfer distance from a source task $\sigma$ to a target task $\tau$ should follow a target-conditioned \emph{coverage semantics}. In our setting, coverage does not merely mean that the source activates a superset of target-critical blocks. Instead, because DCD compares conductance magnitudes, a source is considered useful when it matches or closely approximates the target on the target-critical functional blocks. Discrepancies outside the target salient set should not be heavily penalized, since they correspond to source-specific mechanisms that are not essential for the target.

This target-conditioned view naturally induces asymmetry: a source task may sufficiently cover the blocks required by the target, while the target may fail to cover additional blocks that are salient for the source. We formalize this principle via two assumptions on a directional distance function $d(\cdot\to\cdot)$.

\begin{assumption}[Target-Sufficiency]
\label{ass:target-sufficiency}
If the source task $\sigma$ matches the target task $\tau$ on all indices in the target salient set in terms of normalized conductance magnitude, the distance should be zero regardless of discrepancies outside the target salient set:
\[
    \forall i \in \mathcal{S}_{m,\tau}^{(k)}, \quad 
    u_{m,\sigma}^{(i)} = u_{m,\tau}^{(i)}
    \implies 
    d(\tau \to \sigma) = 0.  
\]
\end{assumption}

\begin{assumption}[Salient-Discriminativity]
\label{ass:salient-discriminativity}
Conversely, if the direction is reversed and the target task $\tau$ fails to match the source task $\sigma$ on some block that is salient for $\sigma$, then the reverse distance must be strictly positive:
\[
    \exists j \in \mathcal{S}_{m,\sigma}^{(k)} 
    \text{ s.t. } 
    u_{m,\tau}^{(j)} \neq u_{m,\sigma}^{(j)}
    \implies 
    d(\sigma \to \tau) > 0.
\]
\end{assumption}

\begin{proposition}[Impossibility of Symmetry]
\label{prop:asymmetry}
Let $d(\cdot \to \cdot)$ satisfy Assumptions~\ref{ass:target-sufficiency} and~\ref{ass:salient-discriminativity}. Suppose $\mathcal{S}_{m,\tau}^{(k)} \subsetneq \mathcal{S}_{m,\sigma}^{(k)}$, and the representations match on $\mathcal{S}_{m,\tau}^{(k)}$ but differ on at least one residual index in $\mathcal{S}_{m,\sigma}^{(k)} \setminus \mathcal{S}_{m,\tau}^{(k)}$. Then $d$ is necessarily asymmetric.
\end{proposition}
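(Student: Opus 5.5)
The argument is a direct two-sided evaluation: we compute $d(\tau\to\sigma)$ and $d(\sigma\to\tau)$ separately using the two assumptions, and show they differ. The hypotheses of Proposition~\ref{prop:asymmetry} are arranged so that each assumption applies in exactly one direction.

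\emph{Forward direction.} By hypothesis, $u_{m,\sigma}^{(i)} = u_{m,\tau}^{(i)}$ for every $i \in \mathcal{S}_{m,\tau}^{(k)}$. This is precisely the antecedent of Assumption~\ref{ass:target-sufficiency}, with $\tau$ as target and $\sigma$ as source. The assumption imposes no condition on indices outside $\mathcal{S}_{m,\tau}^{(k)}$, so any discrepancies there are irrelevant. We therefore conclude $d(\tau\to\sigma)=0$.

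\emph{Reverse direction.} By hypothesis, there is an index $j \in \mathcal{S}_{m,\sigma}^{(k)} \setminus \mathcal{S}_{m,\tau}^{(k)}$ with $u_{m,\tau}^{(j)} \neq u_{m,\sigma}^{(j)}$. In particular $j \in \mathcal{S}_{m,\sigma}^{(k)}$, so the antecedent of Assumption~\ref{ass:salient-discriminativity} holds with $\sigma$ as the task whose salient set is examined. That assumption yields $d(\sigma\to\tau) > 0$. Combining the two directions gives $d(\tau\to\sigma)=0<d(\sigma\to\tau)$, hence $d(\tau\to\sigma)\neq d(\sigma\to\tau)$, and $d$ cannot be symmetric.

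The main step needing care is consistency rather than the logic itself: the hypotheses should be jointly realizable, since otherwise the conclusion is vacuous. Strict inclusion $\mathcal{S}_{m,\tau}^{(k)} \subsetneq \mathcal{S}_{m,\sigma}^{(k)}$ for the same $k$ can occur through ties at the $k$-th largest value, because the definition uses $\ge \rho_k$.

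I would add a remark exhibiting a concrete instance, then normalize it to lie on the unit sphere so it is a valid $u$. Take $d_m=3$ and $k=1$, with $u_{m,\tau}\propto(1,0.5,0)$ and $u_{m,\sigma}\propto(1,1,0)$, rescaled so that the first coordinates agree after normalization. Matching the first coordinate after normalization requires adjusting the third coordinate. For example, use the vectors
$u_{m,\tau}=(a,b,c)$ and $u_{m,\sigma}=(a,a,c')$ with $b<a$ and $a^2+b^2+c^2=1$, $2a^2+c'^2=1$, and $c,c'$ chosen below $b$. Then $\mathcal{S}_{m,\tau}^{(1)}=\{1\}$, $\mathcal{S}_{m,\sigma}^{(1)}=\{1,2\}$, the vectors match on index $1$, and they differ on index $2$.
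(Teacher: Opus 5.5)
Your proof is correct and follows the paper's argument: Assumption~\ref{ass:target-sufficiency} gives $d(\tau\to\sigma)=0$, and Assumption~\ref{ass:salient-discriminativity}, applied to the residual index $j\in\mathcal{S}_{m,\sigma}^{(k)}\setminus\mathcal{S}_{m,\tau}^{(k)}$, gives $d(\sigma\to\tau)>0$. Your non-vacuity remark (strict inclusion at the same $k$ forces a tie at $\rho_k(\mathbf{u}_{m,\sigma})$) goes beyond the paper and is not needed for the argument; your parametric instance is realizable, e.g.\ $a^2=0.45$, $b^2=0.35$, $c^2=0.2$, $c'^2=0.1$.
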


\begin{proof}
Evaluate the distance in both directions:
\begin{itemize}
    \item \textbf{Forward ($\tau \to \sigma$).} Since $\sigma$ matches $\tau$ on all indices in $\mathcal{S}_{m,\tau}^{(k)}$, Assumption~\ref{ass:target-sufficiency} yields $d(\tau \to \sigma) = 0$.
    \item \textbf{Reverse ($\sigma \to \tau$).} Since $\mathcal{S}_{m,\tau}^{(k)} \subsetneq \mathcal{S}_{m,\sigma}^{(k)}$, there exists at least one index $j \in \mathcal{S}_{m,\sigma}^{(k)} \setminus \mathcal{S}_{m,\tau}^{(k)}$ where the representations differ. By Assumption~\ref{ass:salient-discriminativity}, $d(\sigma \to \tau) > 0$.
\end{itemize}
Hence $d(\tau \to \sigma) \neq d(\sigma \to \tau)$, so $d$ is asymmetric.
\end{proof}

\begin{remark}
\label{rem:symmetric-metrics}
Proposition~\ref{prop:asymmetry} reveals a limitation of symmetric metrics, such as cosine similarity and Euclidean distance, under target-conditioned coverage semantics. A symmetric metric cannot distinguish whether the unmatched blocks are irrelevant source-specific mechanisms or target-critical mechanisms. As a result, symmetric metrics may over-penalize comprehensive source tasks that contain additional mechanisms not required by the target.
\end{remark}

\subsection{Entropy-Regularized Alignment}
\label{ap:importance_weight}

In this section, we provide the detailed derivation for the closed-form solution of the importance distribution $\alpha_{m,\tau}$. For notational simplicity and without loss of generality, we drop the subscripts $(m,\tau)$ and denote the alignment vector as $u \in \mathbb{R}^d$ and the decision variable as $\alpha \in \Delta^{d-1}$.

\paragraph{Problem Formulation.}
Consider the entropy-regularized alignment problem:
\begin{equation}
\label{eq:ap_obj}
    \max_{\alpha \in \Delta^{d-1}} \left\{
        \langle \alpha, u \rangle + \frac{1}{\eta} H(\alpha)
    \right\},
\end{equation}
where $\Delta^{d-1} = \{\alpha \in \mathbb{R}^d : \alpha \geq 0,\, \mathbf{1}^\top \alpha = 1\}$ denotes the $(d-1)$-dimensional probability simplex, $H(\alpha) = -\sum_i \alpha_i \log \alpha_i$ is the Shannon entropy, and $\eta > 0$ is a temperature parameter.

\paragraph{Closed-Form Solution.}
\begin{theorem}
\label{thm:softmax}
The optimization problem in~\eqref{eq:ap_obj} admits a unique global maximizer $\alpha^*$, given by:
\begin{equation}
\label{eq:ap_solution}
\begin{aligned}
    \alpha^* &= \mathrm{softmax}(\eta u), \\
    \alpha_i^* &= \frac{\exp(\eta u_i)}{\sum_{j=1}^{d} \exp(\eta u_j)}.
\end{aligned}
\end{equation}
\end{theorem}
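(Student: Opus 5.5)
The plan is to prove Theorem~\ref{thm:softmax} in three steps: existence and uniqueness by strict concavity on a compact set, identification of the maximizer as a softmax via a Gibbs-type identity, and a brief check that no maximizer sits on the boundary of the simplex. Write $\Phi(\alpha)=\langle \alpha,u\rangle+\frac{1}{\eta}H(\alpha)$ on $\Delta^{d-1}$, using the convention $0\log 0=0$.

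\emph{Existence and uniqueness.} With this convention $H$ is continuous on the compact set $\Delta^{d-1}$, so $\Phi$ attains a maximum. The map $t\mapsto -t\log t$ is strictly concave on $[0,\infty)$, so $H$ is strictly concave. Adding the linear term $\langle\alpha,u\rangle$ and using $\eta>0$ keeps $\Phi$ strictly concave. A strictly concave function on a convex set has at most one maximizer, which gives uniqueness.

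\emph{Identification.} Rather than use Lagrange multipliers, which would force us to handle the nonnegativity constraints through KKT, I will use a direct identity. Set $Z=\sum_j e^{\eta u_j}$ and $\alpha^*_i=e^{\eta u_i}/Z$, so $\alpha^*$ lies in the interior of the simplex. Since $\eta u_i=\log\alpha^*_i+\log Z$, for any $\alpha\in\Delta^{d-1}$ we have
\[
\eta\,\Phi(\alpha)=\sum_i \alpha_i\bigl(\log\alpha^*_i+\log Z\bigr)-\sum_i\alpha_i\log\alpha_i=\log Z-\mathrm{KL}(\alpha\,\|\,\alpha^*).
\]
Gibbs' inequality, i.e.\ Jensen applied to $\log$, gives $\mathrm{KL}(\alpha\|\alpha^*)\ge 0$, with equality iff $\alpha=\alpha^*$. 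Hence $\Phi(\alpha)\le \frac{1}{\eta}\log Z=\Phi(\alpha^*)$, with equality only at $\alpha^*$. This establishes that $\alpha^*$ is the unique global maximizer and yields the closed form in \eqref{eq:ap_solution}. As a byproduct, the optimal value is $\frac1\eta\log\sum_j e^{\eta u_j}$.

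\emph{Main obstacle: the boundary.} The only delicate point is the boundary of the simplex, where some $\alpha_i=0$ and $\log\alpha_i$ is undefined. I will handle it explicitly: with $0\log 0=0$, every term $\alpha_i\log(\alpha_i/\alpha^*_i)$ with $\alpha_i=0$ vanishes. Since $\alpha^*_i>0$ for all $i$, the identity and Gibbs' inequality remain valid there. As a sanity check that matches the stationarity condition $u_i-\frac1\eta(\log\alpha_i+1)=\lambda$ from Lagrange multipliers, the partial derivative of $-\alpha_i\log\alpha_i$ blows up as $\alpha_i\to0^+$. So no boundary point can be optimal, which is consistent with the maximizer being interior.
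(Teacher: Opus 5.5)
Your proof is correct, but it takes a different route from the paper's. The paper derives the solution constructively in three steps. First, it argues from the blow-up of $\partial H/\partial\alpha_i$ as $\alpha_i\to0^+$ that the maximizer is interior, so the nonnegativity constraints can be dropped. Second, it solves the Lagrangian stationarity condition $u_i-\frac{1}{\eta}(\log\alpha_i+1)+\lambda=0$ to get $\alpha_i=Ce^{\eta u_i}$. Third, it fixes $C$ by normalization; uniqueness comes from strict concavity.

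You instead guess the candidate and certify it with the identity $\eta\Phi(\alpha)=\log Z-\mathrm{KL}(\alpha\,\|\,\alpha^*)$. This identity is valid on the whole closed simplex because $\alpha^*$ has full support. Gibbs' inequality then gives existence, global optimality and uniqueness at once. The boundary needs no separate treatment; the paper's derivative blow-up argument is only a heuristic unless backed by an explicit perturbation argument. You also get the exact optimality gap $\Phi(\alpha^*)-\Phi(\alpha)=\frac{1}{\eta}\mathrm{KL}(\alpha\,\|\,\alpha^*)$ and the optimal value $\frac{1}{\eta}\log Z$ as byproducts.

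As a consequence, your first step (compactness plus strict concavity) and your closing boundary check are redundant rather than necessary. What the paper's route offers in exchange is that it shows where the softmax form comes from, rather than presupposing it and verifying it afterwards.
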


\begin{proof}
The proof proceeds in three steps.

\paragraph{Step 1: Strict Concavity and Interior Solution.}
The objective $\mathcal{J}(\alpha)$ is strictly concave over $\mathrm{int}(\Delta^{d-1})$, being the sum of a linear function and the strictly concave entropy $H(\alpha)$. This guarantees uniqueness of the maximizer. Moreover, since
\[
    \frac{\partial H(\alpha)}{\partial \alpha_i}
    =
    -(\log \alpha_i + 1)
    \to +\infty
    \quad \text{as } \alpha_i \to 0^+,
\]
the maximizer cannot occur on the boundary of the simplex. Hence the optimal solution lies strictly in the simplex interior, i.e., $\alpha_i^*>0$ for all $i$, and the non-negativity constraints are inactive.

\paragraph{Step 2: KKT Conditions.}
We form the Lagrangian with multiplier $\lambda$ for the equality constraint:
\begin{equation}
    \mathcal{L}(\alpha, \lambda) 
    = 
    \langle \alpha, u \rangle 
    - \frac{1}{\eta} \sum_i \alpha_i \log \alpha_i 
    + \lambda (\mathbf{1}^\top \alpha - 1).
\end{equation}
The stationarity condition $\nabla_\alpha \mathcal{L} = 0$ yields:
\begin{equation}
\label{eq:foc}
    u_i - \frac{1}{\eta}(\log \alpha_i + 1) + \lambda = 0, 
    \quad \forall i.
\end{equation}
Rearranging~\eqref{eq:foc} gives
\begin{equation}
    \log \alpha_i = \eta(u_i + \lambda) - 1
    \quad \Longrightarrow \quad
    \alpha_i = e^{\eta \lambda - 1} \cdot e^{\eta u_i}.
\end{equation}
Denoting $C \triangleq e^{\eta \lambda - 1}$, we have $\alpha_i = C \cdot e^{\eta u_i}$.

\paragraph{Step 3: Normalization.}
Imposing $\mathbf{1}^\top \alpha = 1$:
\begin{equation}
    C \sum_{j=1}^{d} e^{\eta u_j} = 1
    \quad \Longrightarrow \quad
    C = \left( \sum_{j=1}^{d} e^{\eta u_j} \right)^{-1}.
\end{equation}
Substituting back yields~\eqref{eq:ap_solution}. Since $\exp(\eta u_i) > 0$ for all $i$, the solution satisfies $\alpha_i^* > 0$, confirming feasibility.
\end{proof}

\subsection{DCD as a Continuous Relaxation}
\label{ap:dcd_relax}

The target-conditioned coverage semantics above can be instantiated by a set-restricted divergence that focuses solely on the target's salient blocks. Recall that $\mathcal{S}_{m,\tau}^{(k)}$ denotes the salient set of task $\tau$ on model $m$ (Definition~\ref{def:salient-set}). Define the tail mass of the importance distribution as
\begin{equation}
    t_{m,\tau}^{(k)}(\eta) 
    \triangleq 
    \sum_{i\notin \mathcal{S}_{m,\tau}^{(k)}} 
    \alpha_{m,\tau}^{(i)}.
\end{equation}
The hard importance distribution supported exclusively on $\mathcal{S}_{m,\tau}^{(k)}$ is then
\begin{equation}
\bar{\alpha}_{m,\tau}^{(i)}(\eta) \triangleq 
\begin{cases} 
\alpha_{m,\tau}^{(i)} / (1-t_{m,\tau}^{(k)}(\eta)), 
& i \in \mathcal{S}_{m,\tau}^{(k)}, \\ 
0, 
& \text{otherwise}, 
\end{cases}
\end{equation}
which yields the \emph{set-restricted divergence}
\begin{equation}
    d_{m}^{(k)}(\tau\to\sigma;\eta) 
    \triangleq 
    \sum_{i=1}^{d_m} 
    \bar{\alpha}_{m,\tau}^{(i)}(\eta)\,
    \delta_m^{(i)}(\tau,\sigma).
\end{equation}
Here, $d_m^{(k)}$ focuses only on target-salient blocks and ignores discrepancies outside $\mathcal{S}_{m,\tau}^{(k)}$. It therefore exactly captures the idealized target-sufficiency principle in Assumption~\ref{ass:target-sufficiency}. However, it requires a discrete choice of $k$, for which no universally principled criterion exists. We therefore adopt DCD $D_m(\tau\to\sigma)$ as a smooth relaxation, replacing the hard indicator $\mathbf{1}[i\in \mathcal{S}_{m,\tau}^{(k)}]$ with softmax weights $\alpha_{m,\tau}$. We quantify this relaxation via the tail mass $t_{m,\tau}^{(k)}(\eta)$.

\begin{lemma}[Tail Mass Bound]
\label{lem:concentration}
Let $u_{(1)}\ge \cdots \ge u_{(d_m)}$ be the sorted entries of $\mathbf{u}_{m,\tau}$. Assume $\Delta_k \triangleq u_{(k)}-u_{(k+1)} > 0$. Then the tail mass satisfies
\begin{equation}
    t_{m,\tau}^{(k)}(\eta) 
    \le 
    \frac{d_m-k}{k}\exp(-\eta \Delta_k).
\end{equation}
\end{lemma}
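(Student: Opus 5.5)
Write $\alpha_{m,\tau} = \mathrm{softmax}(\eta \mathbf{u}_{m,\tau})$ using the closed form from Theorem~\ref{thm:softmax}, and let $Z = \sum_{j=1}^{d_m} \exp(\eta u_{(j)})$. Then the tail mass is
\[
t_{m,\tau}^{(k)}(\eta) = \frac{\sum_{i\notin\mathcal{S}_{m,\tau}^{(k)}} \exp(\eta u^{(i)})}{Z}.
\]
The plan is to upper-bound the numerator and lower-bound the denominator, each by a term involving only $u_{(k)}$ and $u_{(k+1)}$. The gap $\Delta_k$ then appears directly in the ratio.

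First, identify the salient set. Since $\Delta_k>0$, the threshold $\rho_k(\mathbf{u}_{m,\tau}) = u_{(k)}$ strictly exceeds $u_{(k+1)}$. Ties can only occur among entries equal to $u_{(k)}$, and these all lie at positions $\le k$. Hence $\mathcal{S}_{m,\tau}^{(k)}$ consists of exactly the indices carrying the $k$ largest values, so $|\mathcal{S}_{m,\tau}^{(k)}|=k$ and the complement has $d_m-k$ elements.

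Second, bound the two sums. Every index outside the salient set has $u^{(i)} \le u_{(k+1)}$, so the numerator is at most $(d_m-k)\exp(\eta u_{(k+1)})$. For the denominator, I drop the tail terms, which are positive. Each of the $k$ salient terms is at least $\exp(\eta u_{(k)})$, so $Z \ge k\exp(\eta u_{(k)})$. Dividing gives
\[
t_{m,\tau}^{(k)}(\eta) \le \frac{d_m-k}{k}\exp\bigl(-\eta(u_{(k)}-u_{(k+1)})\bigr) = \frac{d_m-k}{k}\exp(-\eta\Delta_k).
\]
This is the claimed bound. The only property of $\eta$ used is $\eta>0$, which ensures $\exp(\eta\,\cdot)$ is increasing.

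The argument is essentially routine. The only point needing care is the first step: confirming that $|\mathcal{S}_{m,\tau}^{(k)}| = k$ and that the complement contains only entries bounded by $u_{(k+1)}$. Both rely on $\Delta_k>0$ ruling out ties straddling the $k$-th position. Without that assumption the salient set could be larger than $k$, and the constant would have to change.
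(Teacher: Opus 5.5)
Your proposal is correct and follows the paper's proof essentially step for step. Both use $\Delta_k>0$ to get $|\mathcal{S}_{m,\tau}^{(k)}|=k$, bound the tail numerator by $(d_m-k)\exp(\eta u_{(k+1)})$, and lower-bound the softmax normalizer by keeping only the $k$ salient terms, each at least $\exp(\eta u_{(k)})$.
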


\begin{proof}
Let $\mathcal{S}\triangleq \mathcal{S}_{m,\tau}^{(k)}$ and denote $\alpha^{(i)}\triangleq \alpha_{m,\tau}^{(i)}$ for brevity. Since $\Delta_k>0$, the $k$-th and $(k+1)$-th largest entries are separated, and hence $|\mathcal{S}|=k$. By definition of softmax,
\begin{equation}
\begin{aligned}
    \sum_{i\notin \mathcal{S}}\alpha^{(i)}
    &=
    \sum_{i\notin \mathcal{S}}
    \frac{\exp(\eta u_{m,\tau}^{(i)})}
    {\sum_{j=1}^{d_m}\exp(\eta u_{m,\tau}^{(j)})} \\
    &=
    \frac{
    \sum_{i\notin \mathcal{S}}\exp(\eta u_{m,\tau}^{(i)})
    }{
    \sum_{j=1}^{d_m}\exp(\eta u_{m,\tau}^{(j)})
    }.
\end{aligned}
\end{equation}
Lower bound the denominator by keeping only the contribution of indices in $\mathcal{S}$:
\begin{equation}
    \sum_{j=1}^{d_m}\exp(\eta u_{m,\tau}^{(j)})
    \ge
    \sum_{j\in \mathcal{S}}\exp(\eta u_{m,\tau}^{(j)}).
\end{equation}
Hence
\begin{equation}
    \sum_{i\notin \mathcal{S}}\alpha^{(i)}
    \le
    \frac{
    \sum_{i\notin \mathcal{S}}\exp(\eta u_{m,\tau}^{(i)})
    }{
    \sum_{j\in \mathcal{S}}\exp(\eta u_{m,\tau}^{(j)})
    }.
\end{equation}
For any $i\notin \mathcal{S}$, $u_{m,\tau}^{(i)}\le u_{(k+1)}$, while for any $j\in \mathcal{S}$, $u_{m,\tau}^{(j)}\ge u_{(k)}$. Therefore,
\begin{equation}
\begin{aligned}
    \sum_{i\notin \mathcal{S}}\exp(\eta u_{m,\tau}^{(i)})
    &\le 
    (d_m-k)\exp(\eta u_{(k+1)}), \\
    \sum_{j\in \mathcal{S}}\exp(\eta u_{m,\tau}^{(j)})
    &\ge 
    k\exp(\eta u_{(k)}).
\end{aligned}
\end{equation}
Combining the above inequalities yields
\begin{equation}
\begin{aligned}
    \sum_{i\notin \mathcal{S}}\alpha^{(i)}
    &\le 
    \frac{(d_m-k)\exp(\eta u_{(k+1)})}
    {k\exp(\eta u_{(k)})} \\
    &=
    \frac{d_m-k}{k}
    \exp\!\big(-\eta(u_{(k)}-u_{(k+1)})\big) \\
    &=
    \frac{d_m-k}{k}\exp(-\eta\Delta_k).
\end{aligned}
\end{equation}
This concludes the proof of Lemma~\ref{lem:concentration}.
\end{proof}

\begin{proposition}[DCD as a Set-Restricted Relaxation]
\label{prop:dcd-relaxation}
Assume that conductance representations are uniformly bounded, i.e., there exists $V_{\max}>0$ such that $|v_{m,\tau}^{(i)}|\le V_{\max}$ for all $m,\tau,i$. Since the denominator in $\delta_m^{(i)}(\tau,\sigma)$ is lower bounded by $\epsilon$, there exists $B=2V_{\max}/\epsilon$ such that $\delta_m^{(i)}(\tau,\sigma) \le B$ for all $i$. For any source task $\sigma$, the following holds:
\begin{align}
&   
D_m(\tau\to\sigma) 
\nonumber \\
& =
    \left(1-t_{m,\tau}^{(k)}(\eta) \right)  
    d_{m}^{(k)}(\tau\to\sigma;\eta)
    + r_{m}^{(k)}(\tau,\sigma;\eta), 
\end{align}
where
\begin{equation}
    r_{m}^{(k)}(\tau,\sigma;\eta)
    \triangleq
    \sum_{i\notin \mathcal{S}_{m,\tau}^{(k)}} 
    \alpha_{m,\tau}^{(i)}\,
    \delta_m^{(i)}(\tau,\sigma).  
\end{equation}
Furthermore,
\begin{equation}
\begin{aligned}
0 \le r_{m}^{(k)}(\tau,\sigma;\eta) 
&\le 
B\, t_{m,\tau}^{(k)}(\eta), \\
\bigl|D_m(\tau\to\sigma)-d_{m}^{(k)}(\tau\to\sigma;\eta)\bigr|
&\le 
2B\, t_{m,\tau}^{(k)}(\eta).
\end{aligned}
\end{equation}
Consequently, combining with Lemma~\ref{lem:concentration}, $D_m(\tau\to\sigma)$ approximates the set-restricted divergence $d_m^{(k)}(\tau\to\sigma;\eta)$ with an $O(\exp(-\eta\Delta_k))$ error.
\end{proposition}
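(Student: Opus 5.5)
The plan is to split the sum defining $D_m(\tau\to\sigma)$ according to membership in the salient set $\mathcal{S}\triangleq\mathcal{S}_{m,\tau}^{(k)}$ and then read off each term. First I will verify the bound $\delta_m^{(i)}(\tau,\sigma)\le B$. By the triangle inequality and the uniform bound, $|v_{m,\tau}^{(i)}-v_{m,\sigma}^{(i)}|\le 2V_{\max}$. The denominator $|v_{m,\tau}^{(i)}|+\epsilon$ is at least $\epsilon$. Hence $0\le\delta_m^{(i)}\le 2V_{\max}/\epsilon=B$; nonnegativity is immediate.

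For the decomposition, I write
\[
D_m(\tau\to\sigma)=\sum_{i\in\mathcal{S}}\alpha_{m,\tau}^{(i)}\delta_m^{(i)}(\tau,\sigma)+\sum_{i\notin\mathcal{S}}\alpha_{m,\tau}^{(i)}\delta_m^{(i)}(\tau,\sigma).
\]
The second sum is $r_m^{(k)}$ by definition. For the first, I use that on $\mathcal{S}$ we have $\alpha_{m,\tau}^{(i)}=(1-t_{m,\tau}^{(k)}(\eta))\,\bar\alpha_{m,\tau}^{(i)}(\eta)$, while $\bar\alpha$ vanishes off $\mathcal{S}$. So the first sum equals $(1-t)\,d_m^{(k)}(\tau\to\sigma;\eta)$. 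This step needs $1-t>0$ for $\bar\alpha$ to be well defined. That holds because every softmax weight is strictly positive by Theorem~\ref{thm:softmax} and $\mathcal{S}$ is nonempty, so $1-t=\sum_{i\in\mathcal{S}}\alpha^{(i)}>0$.

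The residual bound follows directly: $0\le r\le B\sum_{i\notin\mathcal{S}}\alpha^{(i)}=B\,t$. For the approximation error, I rewrite
\[
D_m-d_m^{(k)}=-t\,d_m^{(k)}+r.
\]
Since $\bar\alpha$ is a probability vector and $\delta\le B$, we get $0\le d_m^{(k)}\le B$. The triangle inequality then gives $|D_m-d_m^{(k)}|\le tB+Bt=2Bt$. (In fact, since the two terms have opposite signs, the sharper bound $Bt$ holds, but $2Bt$ is what is claimed.)

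Finally, I invoke Lemma~\ref{lem:concentration} under its gap assumption $\Delta_k>0$ to get $t\le\frac{d_m-k}{k}e^{-\eta\Delta_k}$. This yields an error of $2B\frac{d_m-k}{k}e^{-\eta\Delta_k}=O(e^{-\eta\Delta_k})$. There is no real obstacle. The only points needing care are positivity of $1-t$, so that $\bar\alpha$ is well defined, and noting that the $O(\cdot)$ conclusion implicitly requires $\Delta_k>0$ as in the lemma.
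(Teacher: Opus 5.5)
Your proposal is correct and follows the paper's proof essentially step for step: you split the sum over $\mathcal{S}_{m,\tau}^{(k)}$, identify the salient part as $(1-t)\,d_m^{(k)}(\tau\to\sigma;\eta)$, bound $0\le r\le Bt$, and apply the triangle inequality to $-t\,d_m^{(k)}+r$ before invoking Lemma~\ref{lem:concentration}. Your extra remarks are valid refinements that the paper does not make explicit: the verification that $\delta\le B$, the positivity of $1-t$ for well-definedness, and the sharper bound $Bt$ obtained because $-t\,d_m^{(k)}\in[-tB,0]$ and $r\in[0,tB]$ have opposite signs.
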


\begin{proof}
Let $\mathcal{S}\triangleq \mathcal{S}_{m,\tau}^{(k)}$, $\alpha^{(i)}\triangleq \alpha_{m,\tau}^{(i)}$, and $\delta^{(i)}\triangleq \delta_m^{(i)}(\tau,\sigma)$ for brevity. Define the tail mass $t\triangleq \sum_{i\notin \mathcal{S}}\alpha^{(i)}$, so that $\sum_{i\in \mathcal{S}}\alpha^{(i)}=1-t$. By definition,
\begin{equation}
\begin{aligned}
    D_m(\tau\to\sigma)
    &=
    \sum_{i=1}^{d_m}\alpha^{(i)}\delta^{(i)} \\
    &=
    \sum_{i\in \mathcal{S}}\alpha^{(i)}\delta^{(i)}
    +
    \sum_{i\notin \mathcal{S}}\alpha^{(i)}\delta^{(i)}.
\end{aligned}
\end{equation}
Define the hard distribution $\bar{\alpha}^{(i)}$ supported on $\mathcal{S}$ by
\begin{equation}
    \bar{\alpha}^{(i)}
    =
    \begin{cases}
        \alpha^{(i)}/(1-t), 
        & i\in \mathcal{S},\\
        0, 
        & i\notin \mathcal{S}.
    \end{cases}
\end{equation}
Then the set-restricted divergence is
\begin{equation}
\begin{aligned}
    d_m^{(k)}(\tau\to\sigma;\eta)
    &=
    \sum_{i=1}^{d_m}\bar{\alpha}^{(i)}\delta^{(i)} \\
    &=
    \sum_{i\in \mathcal{S}}
    \frac{\alpha^{(i)}}{1-t}\delta^{(i)}.
\end{aligned}
\end{equation}
Thus,
\begin{equation}
\begin{aligned}
    \sum_{i\in \mathcal{S}}\alpha^{(i)}\delta^{(i)}
    &=
    (1-t)
    \sum_{i\in \mathcal{S}}
    \frac{\alpha^{(i)}}{1-t}\delta^{(i)} \\
    &=
    (1-t)\,d_m^{(k)}(\tau\to\sigma;\eta).
\end{aligned}
\end{equation}
Let
\begin{equation}
    r_m^{(k)}(\tau,\sigma;\eta)
    \triangleq
    \sum_{i\notin \mathcal{S}}\alpha^{(i)}\delta^{(i)}.
\end{equation}
This gives the claimed decomposition:
\begin{equation}
\begin{aligned}
    D_m(\tau\to\sigma)
    &= (1-t)\,d_m^{(k)}(\tau\to\sigma;\eta) \\
    &\quad + r_m^{(k)}(\tau,\sigma;\eta).
\end{aligned}
\end{equation}
Since each $\alpha^{(i)}\ge 0$ and each $\delta^{(i)}\ge 0$, we have $r_m^{(k)}(\tau,\sigma;\eta)\ge 0$. Under the boundedness condition $\delta^{(i)}\le B$ for all $i$,
\begin{equation}
\begin{aligned}
    r_m^{(k)}(\tau,\sigma;\eta)
    &=
    \sum_{i\notin \mathcal{S}}\alpha^{(i)}\delta^{(i)} \\
    &\le
    \sum_{i\notin \mathcal{S}}\alpha^{(i)}\cdot B \\
    &=
    Bt.
\end{aligned}
\end{equation}
This proves $0\le r_m^{(k)}(\tau,\sigma;\eta)\le Bt$. Moreover, since $d_m^{(k)}(\tau\to\sigma;\eta)$ is an expectation of $\delta^{(i)}$ under a probability distribution supported on $\mathcal{S}$, it also satisfies $0\le d_m^{(k)}(\tau\to\sigma;\eta)\le B$. Therefore,
\begin{equation}
\begin{aligned}
    &D_m(\tau\to\sigma)
    - d_m^{(k)}(\tau\to\sigma;\eta) \\
    &\quad =
    -t\, d_m^{(k)}(\tau\to\sigma;\eta)
    + r_m^{(k)}(\tau,\sigma;\eta).
\end{aligned}
\end{equation}
Taking absolute values gives
\begin{equation}
\begin{aligned}
    &\big|D_m(\tau\to\sigma)
    - d_m^{(k)}(\tau\to\sigma;\eta)\big| \\
    &\quad\le tB + Bt
    = 2Bt.
\end{aligned}
\end{equation}
Finally, substituting the bound on $t$ from Lemma~\ref{lem:concentration} yields the stated $O(\exp(-\eta\Delta_k))$ approximation rate.
\end{proof}

\begin{corollary}
\label{cor:dcd-coverage}
Assume $\Delta_k>0$ and the boundedness condition in Proposition~\ref{prop:dcd-relaxation}. If the source matches the target on the target salient set, i.e.,
\[
    \delta_m^{(i)}(\tau,\sigma)=0 
    \quad 
    \text{for all } 
    i\in \mathcal{S}_{m,\tau}^{(k)},
\]
then
\begin{equation}
\begin{aligned}
D_m(\tau\to\sigma)
&=
r_{m}^{(k)}(\tau,\sigma;\eta) \\
&\le 
B\, t_{m,\tau}^{(k)}(\eta) \\
&\le 
B\cdot\frac{d_m-k}{k}\exp(-\eta\Delta_k).
\end{aligned}
\end{equation}
Hence $D_m(\tau\to\sigma)\to 0$ when $\eta$ or $\Delta_k$ is sufficiently large. Thus, DCD recovers the discrete Target-Sufficiency semantics in Assumption~\ref{ass:target-sufficiency} up to an exponentially small relaxation error, while remaining intrinsically asymmetric as required by Proposition~\ref{prop:asymmetry}, through its target-conditioned weighting $\alpha_{m,\tau}$.
\end{corollary}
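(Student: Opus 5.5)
The plan is to read the corollary off the decomposition in Proposition~\ref{prop:dcd-relaxation} and then apply the tail-mass bound of Lemma~\ref{lem:concentration}.

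\emph{Step 1 (kill the salient part).} Write $\mathcal{S}=\mathcal{S}_{m,\tau}^{(k)}$. Since $\Delta_k>0$, the proof of Lemma~\ref{lem:concentration} gives $|\mathcal{S}|=k$. Moreover $\alpha_{m,\tau}$ is a softmax, so every entry is strictly positive. Hence $1-t_{m,\tau}^{(k)}(\eta)=\sum_{i\in\mathcal{S}}\alpha_{m,\tau}^{(i)}>0$, and $\bar\alpha_{m,\tau}$ and $d_m^{(k)}$ are well defined. The hypothesis $\delta_m^{(i)}(\tau,\sigma)=0$ for all $i\in\mathcal{S}$ then gives $d_m^{(k)}(\tau\to\sigma;\eta)=\sum_{i\in\mathcal{S}}\bar\alpha^{(i)}\delta^{(i)}=0$. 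Substituting this into the decomposition $D_m=(1-t)d_m^{(k)}+r_m^{(k)}$ yields the equality $D_m(\tau\to\sigma)=r_m^{(k)}(\tau,\sigma;\eta)$. Alternatively, one can split the sum defining $D_m$ over $\mathcal{S}$ and its complement directly.

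\emph{Step 2 (chain the bounds).} Proposition~\ref{prop:dcd-relaxation} gives $0\le r_m^{(k)}\le B\,t_{m,\tau}^{(k)}(\eta)$, using the boundedness assumption $\delta\le B$. Lemma~\ref{lem:concentration} gives $t_{m,\tau}^{(k)}(\eta)\le \frac{d_m-k}{k}e^{-\eta\Delta_k}$. Since $B>0$, multiplying the second bound by $B$ and combining it with the first gives the displayed chain of inequalities.

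\emph{Step 3 (limit and interpretation).} With $B$, $d_m$ and $k$ fixed, the right-hand side $B\frac{d_m-k}{k}e^{-\eta\Delta_k}$ tends to $0$ as $\eta\Delta_k\to\infty$. Because $D_m\ge0$, squeezing gives $D_m(\tau\to\sigma)\to0$; this is the approximate version of Assumption~\ref{ass:target-sufficiency}.

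For the asymmetry clause, I would note that $D_m$ weights blocks by $\alpha_{m,\tau}$, which depends only on the target. Consider the reversed direction in the configuration of Proposition~\ref{prop:asymmetry}, where $\tau$ differs from $\sigma$ at some $j\in\mathcal{S}_{m,\sigma}^{(k)}$. Then $D_m(\sigma\to\tau)\ge\alpha_{m,\sigma}^{(j)}\delta_m^{(j)}(\sigma,\tau)$. This quantity is strictly positive because softmax weights are positive and $\delta_m^{(j)}(\sigma,\tau)>0$. It stays bounded away from zero because $\alpha_{m,\sigma}^{(j)}\ge \frac{1}{k}\bigl(1-t_{m,\sigma}^{(k)}\bigr)$ roughly, so the forward and reverse values differ for large $\eta$.

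The main obstacle is this last, interpretive part. The asymmetry claim is qualitative, and making it precise needs a lower bound on the salient weight $\alpha_{m,\sigma}^{(j)}$ uniform in $\eta$. I would first try the bound $\alpha^{(j)}\ge e^{\eta u_{(k)}}/\sum_l e^{\eta u_l}$ together with $\sum_{l\in\mathcal{S}} e^{\eta u_l}\le k\,e^{\eta u_{(1)}}$. The quantitative part in Steps 1 and 2 is routine.
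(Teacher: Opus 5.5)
Your Steps 1--3 are correct, and they are exactly how the corollary follows in the paper. The paper gives no separate proof; the corollary is just this combination. Setting $\delta_m^{(i)}(\tau,\sigma)=0$ on $\mathcal{S}_{m,\tau}^{(k)}$ kills the salient term in the decomposition of Proposition~\ref{prop:dcd-relaxation}. That leaves $D_m(\tau\to\sigma)=r_m^{(k)}(\tau,\sigma;\eta)\le B\,t_{m,\tau}^{(k)}(\eta)$, and Lemma~\ref{lem:concentration} finishes the chain. Your check that $1-t_{m,\tau}^{(k)}(\eta)>0$, so that $d_m^{(k)}$ is well defined, is a welcome detail.

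The quantitative argument you sketch for the asymmetry clause does not work as written.

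\textbf{The weight bound is false.} The claim $\alpha_{m,\sigma}^{(j)}\ge\frac{1}{k}\bigl(1-t_{m,\sigma}^{(k)}\bigr)$ fails for a general salient index $j$. Only the \emph{largest} salient weight is guaranteed to be at least the average salient mass. If $u_{m,\sigma}^{(j)}<\max_l u_{m,\sigma}^{(l)}$, then $\alpha_{m,\sigma}^{(j)}\le\exp\bigl(-\eta(\max_l u_{m,\sigma}^{(l)}-u_{m,\sigma}^{(j)})\bigr)\to0$, while $1-t_{m,\sigma}^{(k)}\to1$.

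\textbf{Your fallback bound also decays.} The bound you propose to try next is of order $\exp\bigl(-\eta(\max_l u_{m,\sigma}^{(l)}-\rho_k(\mathbf{u}_{m,\sigma}))\bigr)$. It also tends to zero unless the top-$k$ values tie, so no lower bound uniform in $\eta$ exists in general.

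\textbf{Positivity at $j$ is not implied.} The Proposition's hypothesis is a mismatch of the normalized $u$'s at $j$. That does not give $\delta_m^{(j)}(\sigma,\tau)>0$ at that particular $j$, because $v_{m,\sigma}$ and $v_{m,\tau}$ can agree at $j$ while having different norms.

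\textbf{Exact asymmetry is impossible for DCD.} Every softmax weight is strictly positive. Hence $D_m(\tau\to\sigma)=0$ iff $v_{m,\tau}=v_{m,\sigma}$ iff $D_m(\sigma\to\tau)=0$. So DCD can never reproduce the exact ``zero forward, positive reverse'' pattern of Proposition~\ref{prop:asymmetry}. Any quantitative asymmetry statement must compare the forward rate $e^{-\eta\Delta_k}$ with the decay rate of the reverse divergence. That rate is governed by $\max_l u_{m,\sigma}^{(l)}-u_{m,\sigma}^{(j)}$ and by which entries of $v$ actually differ, and the comparison needs assumptions your sketch does not supply.

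The paper does not attempt any of this. Its asymmetry clause is only the qualitative remark that $D_m$ is weighted and normalized by the target alone, so it is not a symmetric function; a single two-block example confirms that. Either drop the ``differ for large $\eta$'' claim or restate it as an explicit rate comparison under stated hypotheses.
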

\section{Dataset and Model Details}
\label{ap:modelanddataset}
\subsection{Model Zoo}
We curated a diverse model zoo by extending the collections in LOVM~\cite{lovm_modelgpt} and SWAB~\cite{swab}. Incorporating additional architectures like EVA01-g-14~\cite{eva} and nllb-clip-base~\cite{nlibclip}, our final benchmark comprises 48 models, as detailed in Table~\ref{tab:model_zoo}.

\begin{table*}[!htb]
\centering
\scriptsize
\setlength{\tabcolsep}{3pt}
\renewcommand{\arraystretch}{1.1}
\begin{tabular*}{\textwidth}{@{\extracolsep{\fill}} p{0.29\textwidth} p{0.31\textwidth} rrr}
\toprule
Model (Family) & Pretrain Tag & Params(M) & FLOPs(B) & ImageNet Acc.(\%) \\
\midrule
\url{RN50} & \url{openai} & 102.01 & 18.18 & 59.82 \\
\url{RN50} & \url{cc12m} & 102.01 & 18.18 & 35.91 \\
\url{RN101} & \url{openai} & 119.69 & 25.50 & 62.28 \\
\url{RN101} & \url{yfcc15m} & 119.69 & 25.50 & 34.07 \\
\url{RN101-quickgelu} & \url{openai} & 119.69 & 25.50 & 62.28 \\
\url{RN101-quickgelu} & \url{yfcc15m} & 119.69 & 25.50 & 34.87 \\
\url{RN50x4} & \url{openai} & 178.30 & 51.82 & 66.27 \\
\url{RN50x64} & \url{openai} & 623.26 & 552.65 & 73.91 \\
\url{ViT-B-32} & \url{openai} & 151.28 & 14.78 & 63.32 \\
\url{ViT-B-32} & \url{laion2b_e16} & 151.28 & 14.78 & 65.65 \\
\url{ViT-B-32} & \url{datacomp_xl_s13b_b90k} & 151.28 & 14.78 & 69.17 \\
\url{ViT-B-32} & \url{commonpool_m_clip_s128m_b4k} & 151.28 & 14.78 & 27.25 \\
\url{ViT-B-32-256} & \url{datacomp_s34b_b86k} & 151.29 & 17.46 & 72.81 \\
\url{ViT-B-32-quickgelu} & \url{laion400m_e31} & 151.28 & 14.78 & 62.94 \\
\url{ViT-B-32-quickgelu} & \url{metaclip_fullcc} & 151.28 & 14.78 & 67.66 \\
\url{ViT-B-16} & \url{openai} & 149.62 & 41.09 & 68.34 \\
\url{ViT-B-16} & \url{laion2b_s34b_b88k} & 149.62 & 41.09 & 70.23 \\
\url{ViT-B-16} & \url{datacomp_l_s1b_b8k} & 149.62 & 41.09 & 63.10 \\
\url{ViT-B-16} & \url{commonpool_l_laion_s1b_b8k} & 149.62 & 41.09 & 55.26 \\
\url{ViT-B-16} & \url{dfn2b} & 149.62 & 41.09 & 76.24 \\
\url{ViT-B-16-quickgelu} & \url{metaclip_fullcc} & 149.62 & 41.09 & 72.12 \\
\url{ViT-B-16-plus-240} & \url{laion400m_e31} & 208.38 & 64.03 & 69.04 \\
\url{ViT-L-14} & \url{openai} & 427.62 & 175.33 & 75.54 \\
\url{ViT-L-14} & \url{laion400m_e31} & 427.62 & 175.33 & 72.71 \\
\url{ViT-L-14} & \url{datacomp_xl_s13b_b90k} & 427.62 & 175.33 & 79.21 \\
\url{ViT-L-14} & \url{commonpool_xl_clip_s13b_b90k} & 427.62 & 175.33 & 76.37 \\
\url{ViT-L-14-quickgelu} & \url{metaclip_fullcc} & 427.62 & 175.33 & 79.17 \\
\url{ViT-L-14-quickgelu} & \url{dfn2b} & 427.62 & 175.33 & 81.41 \\
\url{ViT-L-14-336} & \url{openai} & 427.94 & 395.22 & 76.56 \\
\url{ViT-H-14} & \url{laion2b_s32b_b79k} & 986.11 & 381.68 & 77.96 \\
\url{ViT-H-14-quickgelu} & \url{metaclip_fullcc} & 986.11 & 381.68 & 80.51 \\
\url{ViT-H-14-378-quickgelu} & \url{dfn5b} & 986.71 & 1054.05 & 84.37 \\
\url{ViT-g-14} & \url{laion2b_s12b_b42k} & 1366.68 & 581.15 & 76.63 \\
\url{ViT-bigG-14} & \url{laion2b_s39b_b160k} & 2539.57 & 1065.36 & 80.09 \\
\url{roberta-ViT-B-32} & \url{laion2b_s12b_b32k} & 212.72 & 105.87 & 61.71 \\
\url{xlm-roberta-base-ViT-B-32} & \url{laion5b_s13b_b90k} & 366.12 & 105.87 & 62.36 \\
\url{convnext_base_w} & \url{laion2b_s13b_b82k} & 179.39 & 49.38 & 70.78 \\
\url{convnext_base_w_320} & \url{laion_aesthetic_s13b_b82k} & 179.39 & 71.94 & 71.67 \\
\url{convnext_large_d} & \url{laion2b_s26b_b102k_augreg} & 351.77 & 107.50 & 75.91 \\
\url{convnext_large_d_320} & \url{laion2b_s29b_b131k_ft} & 351.77 & 157.98 & 76.60 \\
\url{convnext_xxlarge} & \url{laion2b_s34b_b82k_augreg_soup} & 1200.58 & 443.03 & 79.47 \\
\url{coca_ViT-B-32} & \url{laion2b_s13b_b90k} & 253.56 & 33.34 & 63.31 \\
\url{coca_ViT-L-14} & \url{laion2b_s13b_b90k} & 638.45 & 214.52 & 75.61 \\
\url{EVA01-g-14} & \url{laion400m_s11b_b41k} & 1136.44 & 547.36 & 78.52 \\
\url{EVA02-B-16} & \url{merged2b_s8b_b131k} & 149.69 & 41.09 & 74.72 \\
\url{EVA02-L-14-336} & \url{merged2b_s6b_b61k} & 428.08 & 395.16 & 80.39 \\
\url{nllb-clip-base} & \url{v1} & 501.89 & 369.60 & 24.32 \\
\url{nllb-clip-base-siglip} & \url{v1} & 507.47 & 472.91 & 39.09 \\
\bottomrule
\end{tabular*}
\caption{Overview of the model zoo used in our experiments. The table reports the model name, pre-training tag, parameter count, FLOPs, and ImageNet classification accuracy. Most data are sourced from~\cite{tan2025vision}.}
\label{tab:model_zoo}
\end{table*}

\subsection{Datasets}
To evaluate model selection under diverse real-world conditions, we employ datasets spanning various image domains and tasks, as summarized in Table~\ref{tab:datasets}.

\begin{table*}[t]
\centering
\small
\setlength{\tabcolsep}{6pt}
\renewcommand{\arraystretch}{1.1}
\begin{tabular*}{\textwidth}{@{\extracolsep{\fill}} l l p{0.42\textwidth}}
\toprule
Dataset & Domain & Task \\
\midrule
Stanford Cars~\cite{stan_car} & car picture & image classification \\
CIFAR100~\cite{cifar} & natural picture & image classification \\
CLEVR-D~\cite{clevr} & natural picture & object distance estimation \\
Country211~\cite{vlm—no1} & natural picture & geo-localization \\
Retinopathy~\cite{diabetic-retinopathy-detection} & retina scan & image classification \\
DMLab~\cite{dmlab} & natural picture & object distance estimation \\
DTD~\cite{dtd} & texture picture & image classification \\
EuroSAT~\cite{eurosat} & satellite images & image classification \\
FER2013~\cite{fer2013} & facial picture & facial expression classification \\
GTSRB~\cite{GTSRB} & traffic picture & image classification \\
KITTI~\cite{kitti} & nature picture & object distance estimation \\
MNIST~\cite{mnist} & digit picture & image classification \\
Flowers102~\cite{flower} & flower picture & image classification \\
Oxford Pet~\cite{pet} & pet photograph & image classification \\
PCam~\cite{pcam} & medical picture & image classification \\
RenderedSST2~\cite{vlm—no1} & text picture & optical character recognition \\
RESISC45~\cite{resis} & satellite picture & land cover classification \\
STL10~\cite{stl10} & natural picture & image classification \\
SUN397~\cite{sun} & natural picture & scene classification \\
SVHN~\cite{svhn} & natural picture & OCR \\
VOC2007~\cite{voc} & natural picture & image and pose classification \\
\bottomrule
\end{tabular*}
\caption{Datasets used in our evaluation, with the corresponding domain type and task. Most data are sourced from~\cite{lovm_modelgpt}.}
\label{tab:datasets}
\end{table*}

\section{Additional Details of Ablation Variants}
\label{ap:ablation_details}

\paragraph{Similarity formulation variants.}
For the similarity formulation ablation, all variants use the same conductance-based task representation and rank aggregation pipeline, but differ in how source--target task similarity is computed. 
Cosine similarity measures the normalized inner product between task representation vectors. 
Soft-KL/JSD first applies softmax normalization to task representations and then computes the Jensen--Shannon divergence as a symmetric distributional distance. 
Both variants replace the directional DCD similarity with symmetric alternatives.

\paragraph{Block-weighting variants.}
For the block-weighting ablation, all variants use the same directional relative-deviation distance:
\begin{equation}
D_m(\tau\to\sigma)
=
\sum_i \alpha_{m,\tau}^{(i)}
\frac{|v_{m,\tau}^{(i)}-v_{m,\sigma}^{(i)}|}
{|v_{m,\tau}^{(i)}|+\epsilon}.
\end{equation}
They differ only in the choice of the block-importance weights $\alpha_{m,\tau}$. 
Uniform weighting assigns equal importance to all blocks, i.e., $\alpha_{m,\tau}^{(i)}=1/d_m$. 
Target-norm weighting sets $\alpha_{m,\tau}^{(i)} \propto |v_{m,\tau}^{(i)}|$, directly using the magnitude of target conductance as block importance. 
DCD uses the entropy-regularized target-conditioned distribution 
$\alpha_{m,\tau}^{(i)}=\mathrm{softmax}(\eta u_{m,\tau}^{(i)})$, where $u_{m,\tau}$ is the normalized target representation.

\section{Additional Evaluation Results}
\label{ap:additional_results}
\subsection{Standard Ranking Metrics}
\label{ap:standard_metrics}
In addition to the metrics used in the main experiments, Table~\ref{tab:standard_metrics} reports standard ranking metrics over the full candidate model set. 
NDCG@5 (no intersection) evaluates the predicted top-5 list without restricting evaluation to the intersection between predicted and ground-truth top-5 models. 
Recall@1 measures whether the top-ranked predicted model matches the ground-truth best model.

\begin{table}[!ht]
  \centering
  \small
  \begin{tabular*}{\columnwidth}{@{\extracolsep{\fill}} lcc}
    \toprule
    \textbf{Method} & NDCG@5 (no intersection) $\uparrow$ & Recall@1 $\uparrow$ \\
    \midrule
    ModelGPT & 0.826 & 0.333 \\
    SWAB     & 0.828 & 0.381 \\
    Ours     & \textbf{0.833} & \textbf{0.476} \\
    \bottomrule
  \end{tabular*}
  \caption{\textbf{Standard ranking metrics.}
NDCG@5 (no intersection) evaluates the predicted top-5 list over the full candidate model set, while Recall@1 evaluates whether the top-ranked predicted model matches the ground-truth best model.}
  \label{tab:standard_metrics}
\end{table}

\subsection{Hyperparameter Sensitivity}
\begin{figure*}[t]
  \centering
  \begin{minipage}[t]{0.32\textwidth}
    \centering
    \includegraphics[width=\linewidth]{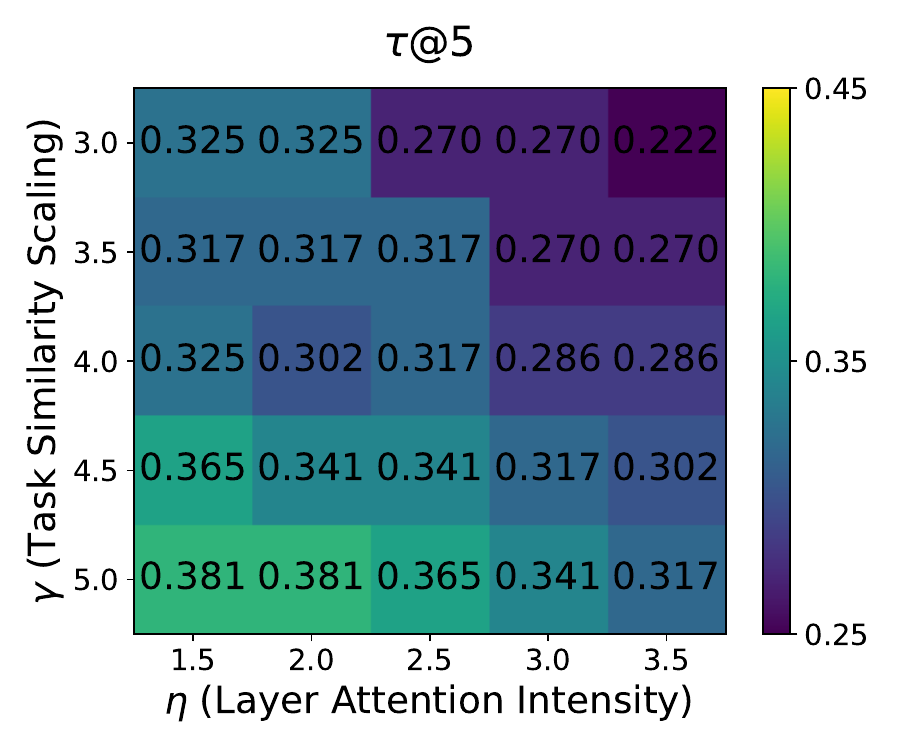}
    \\[2pt]
    \small (a) $\tau$@5 (higher is better)
  \end{minipage}
  \hfill
  \begin{minipage}[t]{0.32\textwidth}
    \centering
    \includegraphics[width=\linewidth]{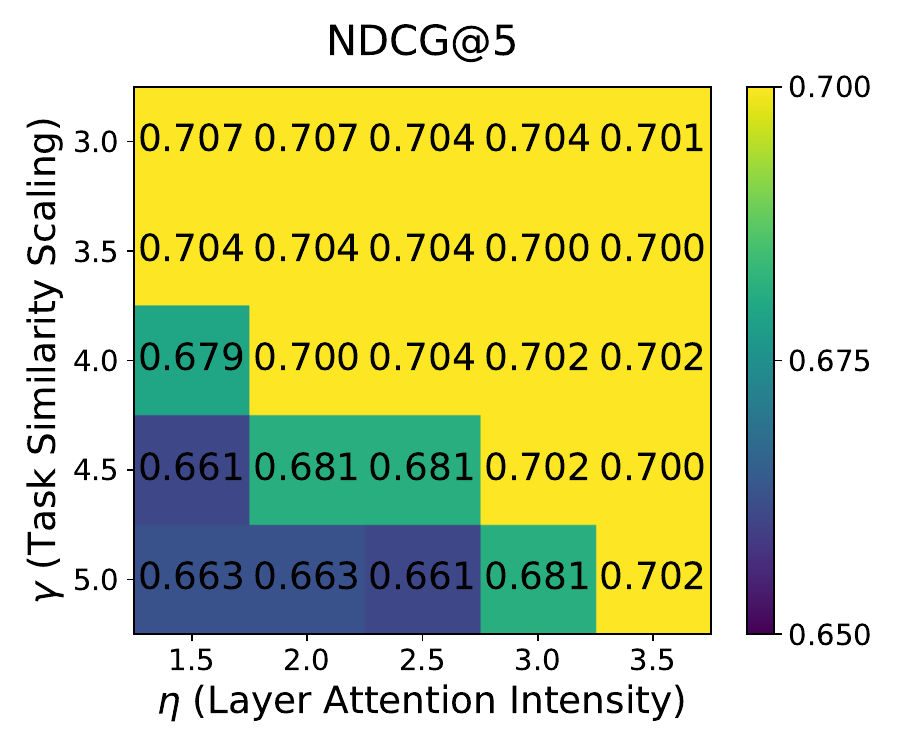}
    \\[2pt]
    \small (b) NDCG@5 (higher is better)
  \end{minipage}
  \hfill
  \begin{minipage}[t]{0.32\textwidth}
    \centering
    \includegraphics[width=\linewidth]{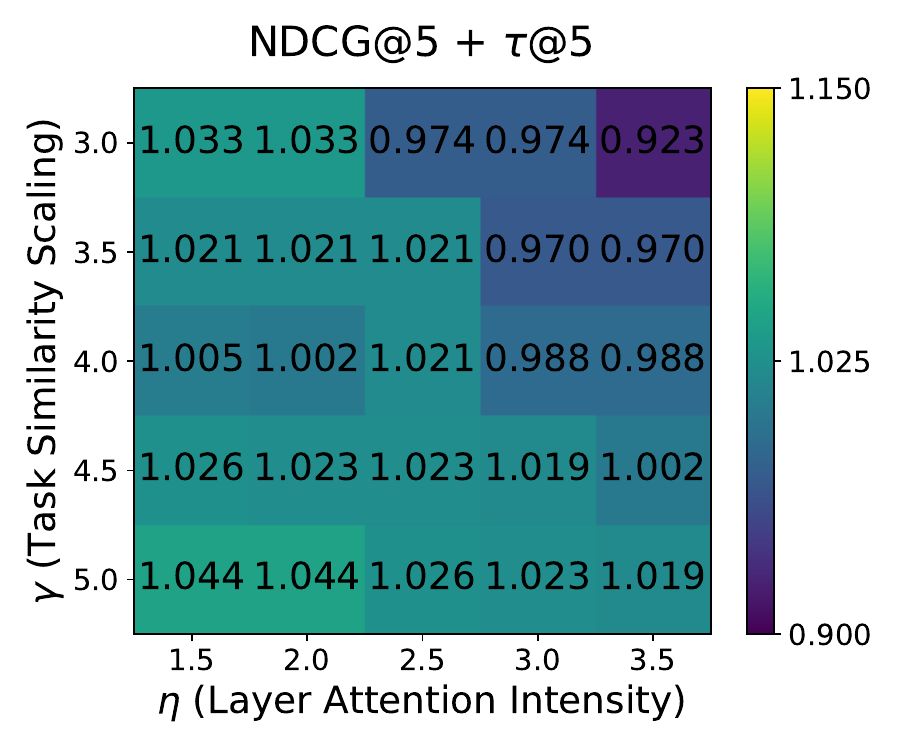}
    \\[2pt]
    \small (c) Sum = NDCG@5 + $\tau$@5
  \end{minipage}

  \caption{\textbf{Sensitivity to $\eta$ and $\gamma$.} 
Heatmaps of selection performance when varying $\eta$ (softmax temperature for target-conditioned block importance) and $\gamma$ (softmin temperature for DCD-based source weighting). 
All values are averaged over 10 independent random low-shot sampling runs.
Performance is stable across a wide range.}

  \label{fig:hyperparameter}
\end{figure*}
Our method incorporates two key hyperparameters: $\eta$ (layer attention intensity), which regulates the softmax temperature to weight network blocks based on their conductance, and $\gamma$, which scales task similarity to modulate discriminative strength when aggregating rankings from source tasks. We evaluate the sensitivity of our approach by varying $\eta \in \{1.5, 2.0, 2.5, 3.0, 3.5\}$ and $\gamma \in \{3.0, 3.5, 4.0, 4.5, 5.0\}$. The results are visualized as heatmaps in Figure~\ref{fig:hyperparameter}.

The empirical results demonstrate that our method is highly robust to hyperparameter variations. Specifically, varying $\eta$ from 1.5 to 3.5 yields only marginal fluctuations in NDCG@5, suggesting that task representations remain discriminative provided that informative layers are sufficiently prioritized. Similarly, performance remains stable across the tested range of $\gamma$, with $\tau$@5 ranging from 0.222 to 0.381. The configuration of $\eta=2.0$ and $\gamma=5.0$ attains the peak $\tau$@5 (0.365), striking an effective balance between layer-wise noise suppression and task discrimination. These findings confirm that our method maintains strong performance across a broad parameter space without the need for exhaustive per-task tuning.

\subsection{Efficiency of Task Sampling}
\label{sec:efficiency}

We separately vary the number of unlabeled images for source-task and target-task characterization. 
For source-task sampling, we fix $N_{\text{tgt}}=1$ and vary $N_{\text{src}}$ from 1 to 100, averaging over 5 independent random low-shot sampling runs. 
As shown in Figure~\ref{fig:sample_efficiency}(a), performance improves rapidly when $N_{\text{src}}$ increases to 25 and then saturates around 0.71 NDCG@5 and 0.38 $\tau$@5. 
This suggests that a compact source library with about 25 anchor images per task is sufficient and reusable across target tasks.

We further vary $N_{\text{tgt}}$ while keeping the source library fixed. 
As shown in Figure~\ref{fig:sample_efficiency}(b), performance stays within a small fluctuation range across target sample sizes, with NDCG@5 close to 0.70 and competitive $\tau$@5 even in the single-shot setting. 
Thus, very few target images already provide informative model-specific conductance signals, and increasing $N_{\text{tgt}}$ brings limited additional gain. 
Small fluctuations at larger $N_{\text{tgt}}$ should not be over-interpreted, as part of the variation may come from sampling noise.

\begin{figure*}[t]
  \centering
  \begin{minipage}[t]{0.48\textwidth}
    \centering
    \includegraphics[width=\linewidth]{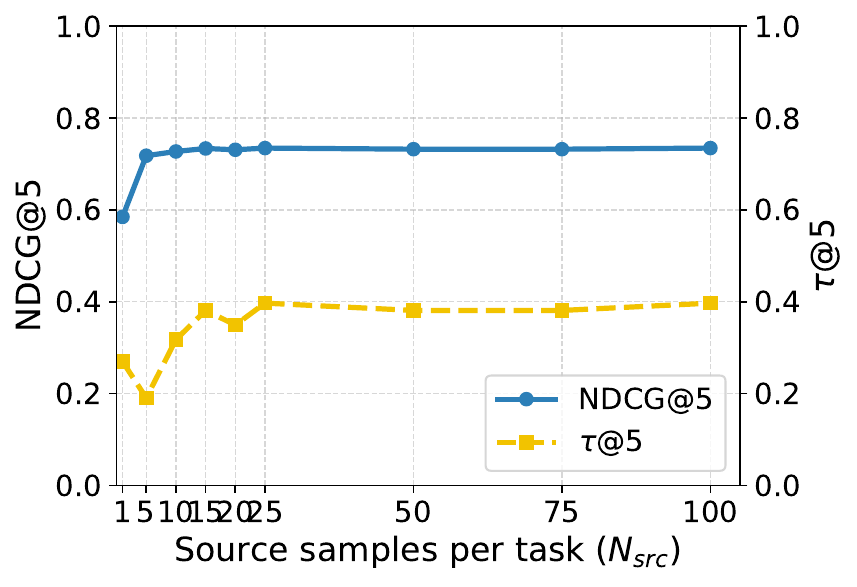}
    \\[2pt]
    \small (a) Source samples per task ($N_{\text{src}}$).
  \end{minipage}
  \hfill
  \begin{minipage}[t]{0.48\textwidth}
    \centering
    \includegraphics[width=\linewidth]{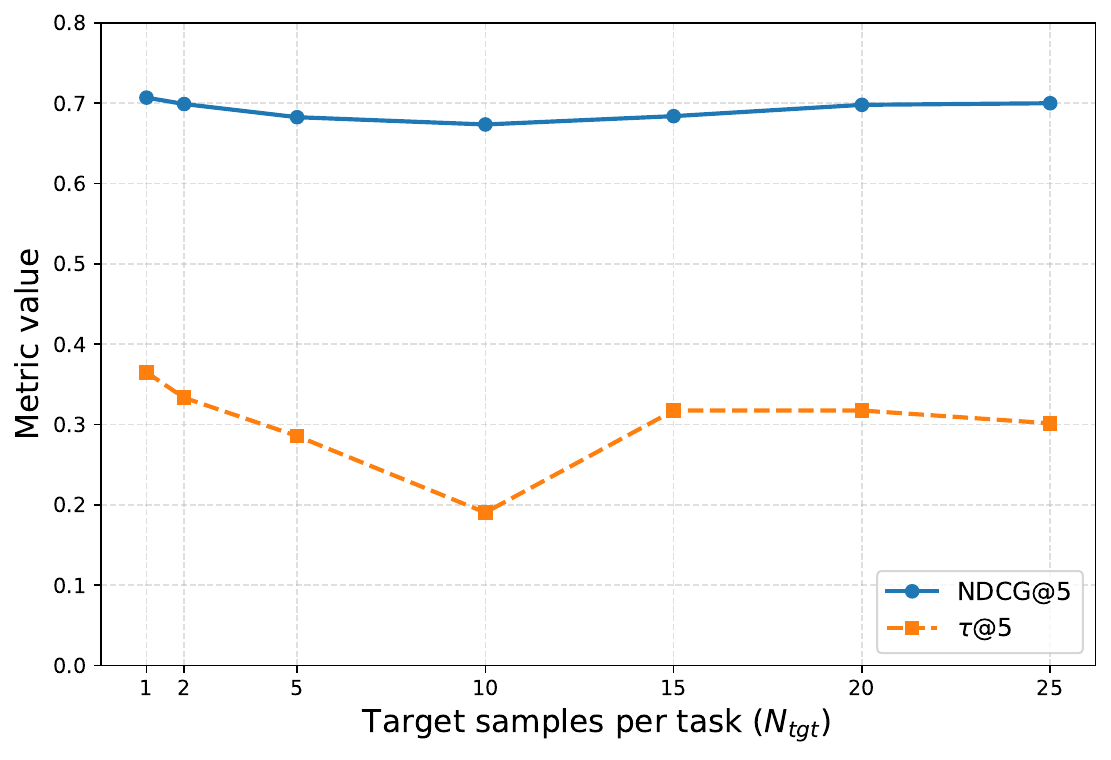}
    \\[2pt]
    \small (b) Target samples per task ($N_{\text{tgt}}$).
  \end{minipage}
  \caption{\textbf{Efficiency of Task Sampling.} 
Source-task sampling saturates around $N_{\text{src}}=25$, while target-task sampling remains stable with small fluctuations across different $N_{\text{tgt}}$ values.}
\label{fig:sample_efficiency}
\end{figure*}

\subsection{Model Clustering in Conductance induced Task Space}
\label{ap:model_clustering}

To examine whether conductance-induced task similarity merely reflects architecture identity, we cluster all 48 models using HDBSCAN on their task-similarity structures. 
For each model, we compute conductance-based task representations with 25 unlabeled samples per task, derive its pairwise similarity matrix over 21 tasks, flatten the matrix, and apply softmax normalization before clustering.

Table~\ref{tab:task_cluster} shows that the induced clusters partly follow architecture and pretraining patterns, but are not determined by architecture alone. 
For example, several clusters mix different families, including ConvNeXt with EVA, NLLB-CLIP with ViT, and text-variant models with ViT. 
This suggests that the conductance-induced task space captures model-dependent task perception beyond coarse architecture identity, while still being influenced by architecture and pretraining.

\begin{table*}[p]
\centering
\caption{\textbf{Model clustering in conductance-induced task space.}
HDBSCAN clustering is performed on softmax-normalized task-similarity vectors derived from layer conductance. 
Cluster $-1$ denotes noise points. Clusters marked with $\star$ contain multiple architecture families.}
\label{tab:task_cluster}

\begin{adjustbox}{width=\textwidth,totalheight=0.92\textheight,keepaspectratio,center}
\begin{tabular}{clll}
\toprule
\textbf{Cluster} & \textbf{Family} & \textbf{Architecture} & \textbf{Model} \\
\midrule
\multirow{15}{*}{$-1^\dagger$} 
 & CoCa & CoCa & \texttt{coca\_vit\_b\_32\_laion2b\_s13b\_b90k} \\
 & CoCa & CoCa & \texttt{coca\_vit\_l\_14\_laion2b\_s13b\_b90k} \\
 & EVA & EVA & \texttt{eva02\_b\_16\_merged2b\_s8b\_b131k} \\
 & EVA & EVA & \texttt{eva02\_l\_14\_336\_merged2b\_s6b\_b61k} \\
 & ResNet & ResNet-50x & \texttt{rn50x64\_openai} \\
 & ViT & ViT-B/16 & \texttt{vit\_b\_16\_dfn2b} \\
 & ViT & ViT-B/16 & \texttt{vit\_b\_16\_laion2b\_s34b\_b88k} \\
 & ViT & ViT-B/16 & \texttt{vit\_b\_16\_plus\_240\_laion400m\_e31} \\
 & ViT & ViT-B/16 & \texttt{vit\_b\_16\_quickgelu\_metaclip\_fullcc} \\
 & ViT & ViT-B/32 & \texttt{vit\_b\_32\_256\_datacomp\_s34b\_b86k} \\
 & ViT & ViT-B/32 & \texttt{vit\_b\_32\_commonpool\_m\_clip\_s128m...} \\
 & ViT & ViT-B/32 & \texttt{vit\_b\_32\_quickgelu\_metaclip\_fullcc} \\
 & ViT & ViT-bigG & \texttt{vit\_bigg\_14\_laion2b\_s39b\_b160k} \\
 & ViT & ViT-H & \texttt{vit\_h\_14\_quickgelu\_metaclip\_fullcc} \\
 & ViT & ViT-L & \texttt{vit\_l\_14\_quickgelu\_metaclip\_fullcc} \\
\midrule
\multirow{3}{*}{0} 
 & ResNet & ResNet-101 & \texttt{rn101\_quickgelu\_yfcc15m} \\
 & ResNet & ResNet-101 & \texttt{rn101\_yfcc15m} \\
 & ResNet & ResNet-50 & \texttt{rn50\_cc12m} \\
\midrule
\multirow{4}{*}{1} 
 & ResNet & ResNet-101 & \texttt{rn101\_openai} \\
 & ResNet & ResNet-101 & \texttt{rn101\_quickgelu\_openai} \\
 & ResNet & ResNet-50 & \texttt{rn50\_openai} \\
 & ResNet & ResNet-50x & \texttt{rn50x4\_openai} \\
\midrule
\multirow{2}{*}{2} 
 & ConvNeXt & ConvNeXt & \texttt{convnext\_base\_w\_320\_laion\_aesthe...} \\
 & ConvNeXt & ConvNeXt & \texttt{convnext\_base\_w\_laion2b\_s13b\_b82k} \\
\midrule
\multirow{4}{*}{3$\star$} 
 & ConvNeXt & ConvNeXt & \texttt{convnext\_large\_d\_320\_laion2b\_s29...} \\
 & ConvNeXt & ConvNeXt & \texttt{convnext\_large\_d\_laion2b\_s26b\_b1...} \\
 & ConvNeXt & ConvNeXt & \texttt{convnext\_xxlarge\_laion2b\_s34b\_b8...} \\
 & EVA & EVA & \texttt{eva01\_g\_14\_laion400m\_s11b\_b41k} \\
\midrule
\multirow{6}{*}{4$\star$} 
 & NLLB-CLIP & NLLB-CLIP & \texttt{nllb\_clip\_base\_siglip\_v1} \\
 & ViT & ViT-B/16 & \texttt{vit\_b\_16\_openai} \\
 & ViT & ViT-H & \texttt{vit\_h\_14\_378\_quickgelu\_dfn5b} \\
 & ViT & ViT-L & \texttt{vit\_l\_14\_336\_openai} \\
 & ViT & ViT-L & \texttt{vit\_l\_14\_openai} \\
 & ViT & ViT-L & \texttt{vit\_l\_14\_quickgelu\_dfn2b} \\
\midrule
\multirow{2}{*}{5} 
 & ViT & ViT-B/16 & \texttt{vit\_b\_16\_commonpool\_l\_laion\_s1b\_b8k} \\
 & ViT & ViT-B/16 & \texttt{vit\_b\_16\_datacomp\_l\_s1b\_b8k} \\
\midrule
\multirow{8}{*}{6$\star$} 
 & Text-variant & XLM-RoBERTa & \texttt{xlm\_roberta\_base\_vit\_b\_32\_laion5...} \\
 & ViT & ViT-B/32 & \texttt{vit\_b\_32\_datacomp\_xl\_s13b\_b90k} \\
 & ViT & ViT-B/32 & \texttt{vit\_b\_32\_openai} \\
 & ViT & ViT-G & \texttt{vit\_g\_14\_laion2b\_s12b\_b42k} \\
 & ViT & ViT-H & \texttt{vit\_h\_14\_laion2b\_s32b\_b79k} \\
 & ViT & ViT-L & \texttt{vit\_l\_14\_commonpool\_xl\_clip\_s13b...} \\
 & ViT & ViT-L & \texttt{vit\_l\_14\_datacomp\_xl\_s13b\_b90k} \\
 & ViT & ViT-L & \texttt{vit\_l\_14\_laion400m\_e31} \\
\midrule
\multirow{2}{*}{7$\star$} 
 & Text-variant & RoBERTa & \texttt{roberta\_vit\_b\_32\_laion2b\_s12b\_b32k} \\
 & ViT & ViT-B/32 & \texttt{vit\_b\_32\_laion2b\_e16} \\
\midrule
\multirow{2}{*}{8$\star$} 
 & NLLB-CLIP & NLLB-CLIP & \texttt{nllb\_clip\_base\_v1} \\
 & ViT & ViT-B/32 & \texttt{vit\_b\_32\_quickgelu\_laion400m\_e31} \\
\bottomrule
\end{tabular}
\end{adjustbox}
\end{table*}

\subsection{Variation of the Scalar Probe}
\label{ap:scalar_probe_variation}

To check whether the scalar probe $F_m(x)=\|f_m(x)\|_2$ is overly normalized, we report its statistics across representative models and tasks in Table~\ref{tab:scalar_probe_variation}. 
For each model--task pair, we sample 50 images and compute the mean, standard deviation, coefficient of variation (CV), minimum, and maximum. 
The values vary across models and tasks, indicating that the probe is not a degenerate constant. 
We use $F_m(x)$ only to induce conductance profiles, not as a direct proxy for downstream accuracy or transferability.

\begin{table*}[p]
\centering
\scriptsize
\setlength{\tabcolsep}{4pt}
\renewcommand{\arraystretch}{0.95}
\caption{\textbf{Variation of the scalar probe across models and tasks.}
We report statistics of $F_m(x)=\|f_m(x)\|_2$ across 5 representative models and 6 tasks, with 50 images per model--task pair. 
CV denotes the coefficient of variation, i.e., $\sigma/\mu$.}
\label{tab:scalar_probe_variation}
\begin{tabular*}{\textwidth}{@{\extracolsep{\fill}} llccccc}
\toprule
\textbf{Model} & \textbf{Task} & \textbf{Mean} & \textbf{Std} & \textbf{CV} & \textbf{Min} & \textbf{Max} \\
\midrule
\multirow{6}{*}{ViT-B-16}
 & CIFAR-100 & 11.28 & 0.2904 & 0.0257 & 10.34 & 11.81 \\
 & MNIST     & 11.51 & 0.2220 & 0.0193 & 11.07 & 11.96 \\
 & STL-10    & 11.11 & 0.4540 & 0.0409 & 10.18 & 12.10 \\
 & SVHN      & 11.47 & 0.4701 & 0.0410 & 10.52 & 12.57 \\
 & GTSRB     & 11.40 & 0.4302 & 0.0377 & 10.48 & 12.31 \\
 & DTD       & 11.41 & 0.3385 & 0.0297 & 10.75 & 12.09 \\
\midrule
\multirow{6}{*}{ViT-L-14}
 & CIFAR-100 & 20.11 & 0.9469 & 0.0471 & 17.62 & 21.94 \\
 & MNIST     & 21.56 & 0.4689 & 0.0218 & 20.50 & 22.95 \\
 & STL-10    & 19.97 & 0.8796 & 0.0441 & 18.45 & 21.94 \\
 & SVHN      & 20.87 & 0.5883 & 0.0282 & 19.51 & 22.79 \\
 & GTSRB     & 19.43 & 0.8268 & 0.0425 & 17.56 & 20.92 \\
 & DTD       & 19.91 & 0.6529 & 0.0328 & 18.38 & 21.26 \\
\midrule
\multirow{6}{*}{RN50}
 & CIFAR-100 & 2.15 & 0.1762 & 0.0819 & 1.83 & 2.55 \\
 & MNIST     & 2.10 & 0.1030 & 0.0490 & 1.89 & 2.33 \\
 & STL-10    & 2.07 & 0.2104 & 0.1017 & 1.63 & 2.63 \\
 & SVHN      & 2.17 & 0.1123 & 0.0517 & 1.92 & 2.43 \\
 & GTSRB     & 1.95 & 0.1443 & 0.0738 & 1.65 & 2.23 \\
 & DTD       & 1.89 & 0.2267 & 0.1197 & 1.62 & 2.79 \\
\midrule
\multirow{6}{*}{convnext\_base\_w}
 & CIFAR-100 & 7.93 & 0.5370 & 0.0677 & 6.88 & 9.51 \\
 & MNIST     & 7.51 & 0.1971 & 0.0263 & 7.07 & 7.98 \\
 & STL-10    & 8.53 & 0.4890 & 0.0573 & 7.53 & 9.49 \\
 & SVHN      & 7.25 & 0.3494 & 0.0482 & 6.69 & 8.26 \\
 & GTSRB     & 7.59 & 0.5462 & 0.0720 & 6.52 & 8.73 \\
 & DTD       & 9.31 & 0.8790 & 0.0944 & 7.81 & 11.50 \\
\midrule
\multirow{6}{*}{EVA02-B-16}
 & CIFAR-100 & 0.12 & 0.0079 & 0.0682 & 0.11 & 0.15 \\
 & MNIST     & 0.16 & 0.0129 & 0.0796 & 0.14 & 0.19 \\
 & STL-10    & 0.11 & 0.0049 & 0.0452 & 0.10 & 0.12 \\
 & SVHN      & 0.14 & 0.0123 & 0.0877 & 0.11 & 0.17 \\
 & GTSRB     & 0.11 & 0.0076 & 0.0671 & 0.10 & 0.13 \\
 & DTD       & 0.13 & 0.0096 & 0.0730 & 0.11 & 0.16 \\
\bottomrule
\end{tabular*}
\end{table*}

\section{Miscellaneous Details}
\subsection{Computational Cost}
\label{ap:cost}

Conductance extraction is more expensive than a standard forward-only pass, and our method is not designed for raw latency superiority. 
Its practical advantage lies in source-side reusability and low-supervision target-side selection. 
The cost decomposes into an offline stage, where source-task conductance representations are computed once and reused across target tasks, and an online stage, where only a few unlabeled target samples are used to compute target conductance before lightweight rank aggregation.

In our implementation, extracting target conductance across 48 VLMs with one target sample takes approximately 11.2 hours. 
Increasing the target sample size to 150 takes approximately 180 hours, which is sub-linear in the sample count due to fixed overhead such as model loading. 
By comparison, standard forward evaluation on 5 labeled target images across all candidate models takes approximately 2.7 hours, but requires target labels and does not benefit from source-side amortization.

\subsection{Distribution over 10 Runs for Main Results}
\label{ap:complete_results}
Figure~\ref{fig:main_boxplots} shows the boxplots of the main results over 10 random runs.
\begin{figure*}[t]
  \centering
  \begin{minipage}[t]{0.48\textwidth}
    \centering
    \includegraphics[width=\linewidth]{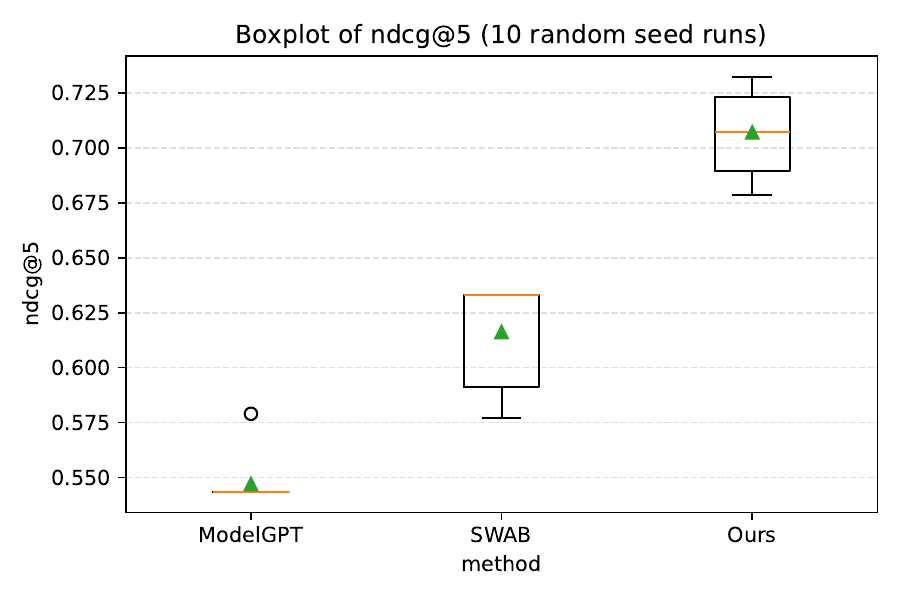}
    \\[2pt]
    \small (a) NDCG@5 over 10 runs.
  \end{minipage}
  \hfill
  \begin{minipage}[t]{0.48\textwidth}
    \centering
    \includegraphics[width=\linewidth]{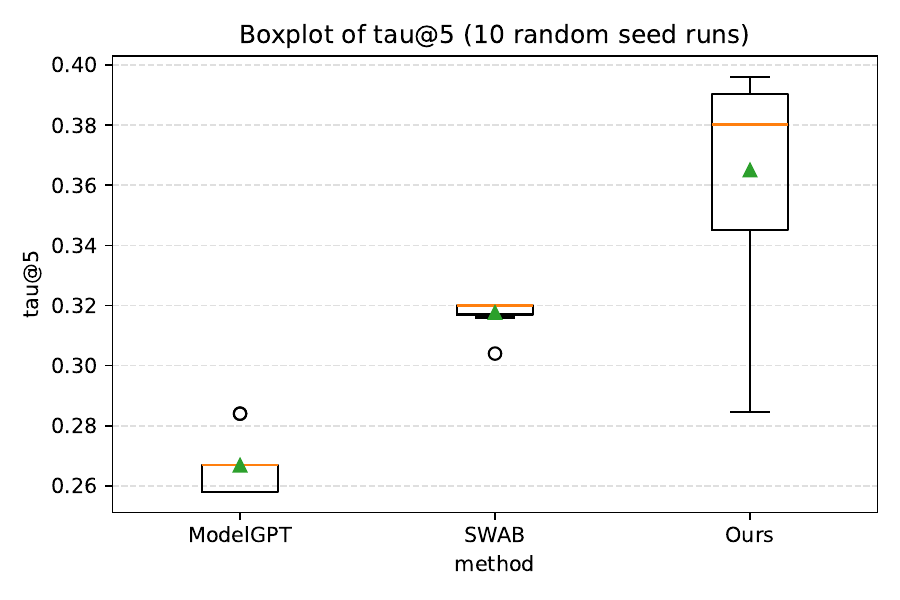}
    \\[2pt]
    \small (b) Kendall's $\tau$@5 over 10 runs.
  \end{minipage}
  \caption{Boxplots of main results over 10 random sampling runs. Each box summarizes the distribution across runs for three methods (ModelGPT, SWAB, and ours).}
  \label{fig:main_boxplots}
\end{figure*}

\subsection{Evaluation Metrics}
\label{ap:metrics}
For each task $\tau$, we compare estimated ranking $\widehat{\pi}_\tau$ (from $\widehat{R}_m(\tau)$, Sec.~\ref{sec:dcd}) against ground-truth $\pi_\tau^\ast$ (from $R_m(\tau)$) on the Top-k intersection
\begin{equation}
\begin{aligned}
\mathcal{I}_\tau^{k} \triangleq {}& \{m \mid \hat{r}_\tau(m) \le k\} \\
&\cap \{m \mid r_\tau^\ast(m) \le k\},
\end{aligned}
\end{equation}
where $\hat{r}_\tau(m)$ and $r_\tau^\ast(m)$ denote predicted and ground-truth ranks. We report NDCG@k, Kendall's $\tau$@k (SciPy implementation with tie-handling), and their sum; both metrics are set to $0$ when $|\mathcal{I}_\tau^{k}|<2$.

\paragraph{NDCG@k.}
Define relevance $\mathrm{rel}_\tau(m)\triangleq \mathrm{rank}_{\downarrow}(r_\tau^\ast(m))$. Ordering $\mathcal{I}_\tau^{k}$ by $\hat{r}_\tau(m)$ as $(m_{(1)},\dots,m_{(n)})$,
\begin{equation}
\mathrm{NDCG@k}(\tau)\triangleq
\frac{\sum_{i=1}^{n}\frac{2^{\mathrm{rel}_\tau(m_{(i)})}-1}{\log_2(i+1)}}{\mathrm{IDCG@k}(\tau)}.
\end{equation}
where $n=|\mathcal{I}_\tau^{k}|$ and $\mathrm{IDCG@k}(\tau)$ is the ideal DCG@k.

\paragraph{$\tau$@k.}
\begin{equation}
\tau@k(\tau)\triangleq
\frac{C-D}{\binom{n}{2}},
\end{equation}
where $C$ and $D$ count concordant and discordant pairs between predicted and ground-truth ranks on $\mathcal{I}_\tau^{k}$.

\paragraph{Sum.}
$\mathrm{Score}(\tau)\triangleq \mathrm{NDCG@5}(\tau)+\tau@5(\tau)$.

\subsection{Ground-Truth Rankings and Conductance Examples}
To characterize the dataset-dependent nature of model performance, we summarize the ground-truth ranking of all models on each dataset.Figure~\ref{fig:ground} shows that the induced model order is not universal. We further visualize layer conductance patterns aggregated over all datasets for two representative models (Figure~\ref{fig:layer_con_example}), including coca-ViT-L-14 (laion2b\_s13b\_b90k) and xlm-roberta-base-ViT-B-32 (laion5b\_s13b\_b90k).
\begin{figure*}[t]
  \centering
  \begin{minipage}[t]{0.48\textwidth}
    \centering
    \includegraphics[width=\linewidth]{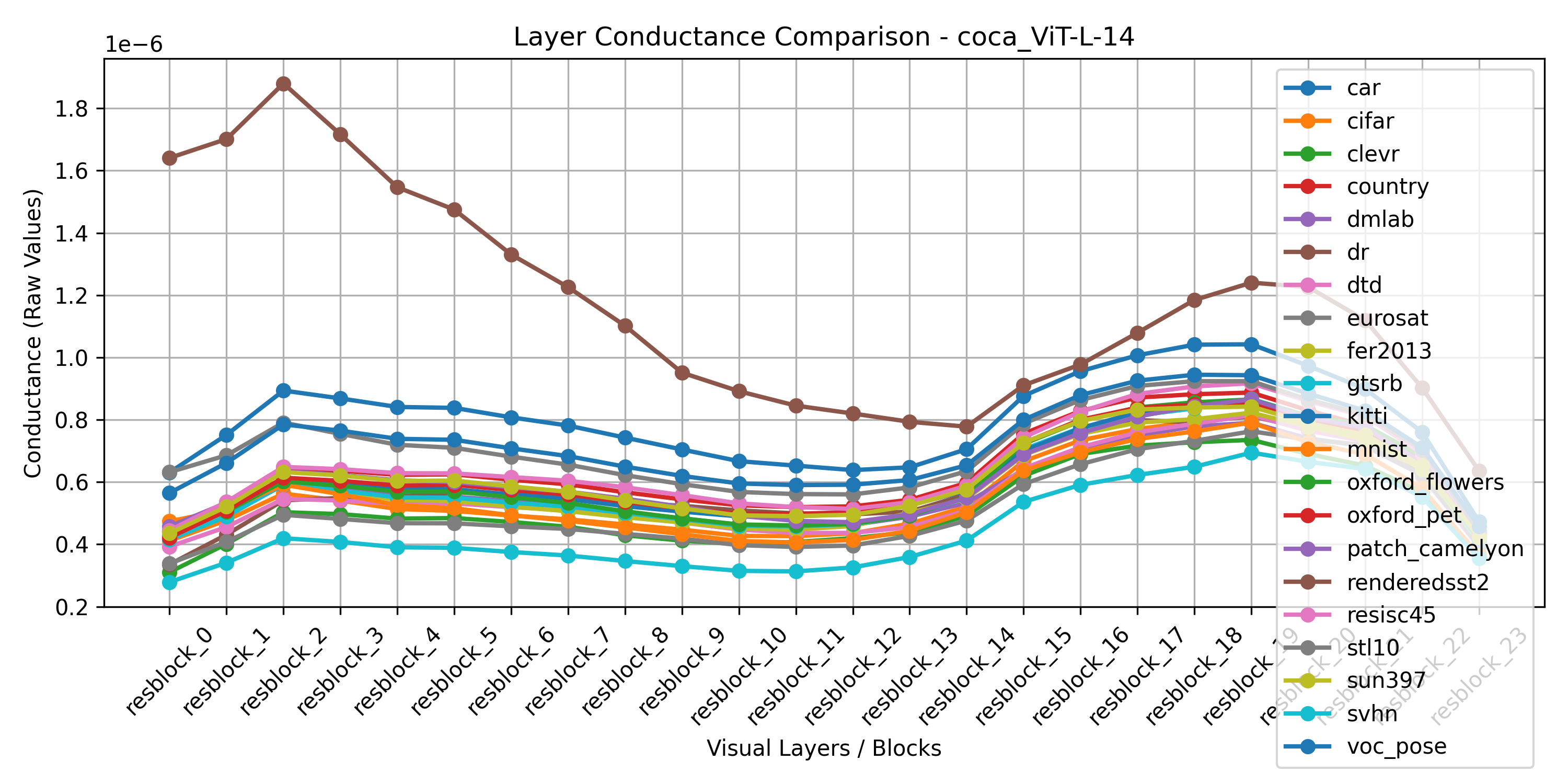}
    \\[2pt]
    \small (a) coca-ViT-L-14 (laion2b\_s13b\_b90k).
  \end{minipage}
  \hfill
  \begin{minipage}[t]{0.48\textwidth}
    \centering
    \includegraphics[width=\linewidth]{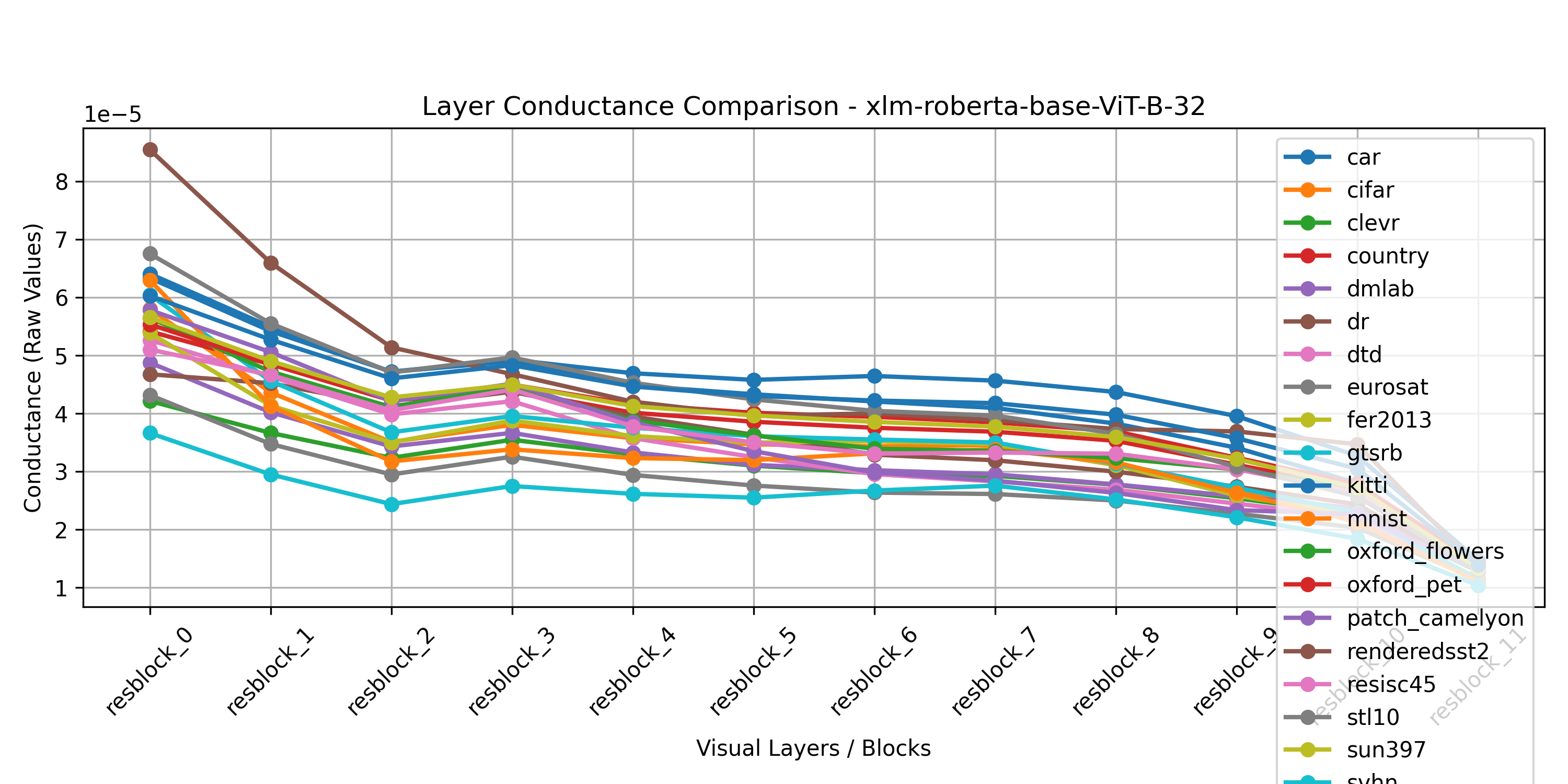}
    \\[2pt]
    \small (b) xlm-roberta-base-ViT-B-32 (laion5b\_s13b\_b90k).
  \end{minipage}
  \caption{Layer conductance patterns aggregated over all datasets for two representative models.}
  \label{fig:layer_con_example}
\end{figure*}

\begin{figure*}[t]
  \centering
  \includegraphics[width=0.82\textwidth]{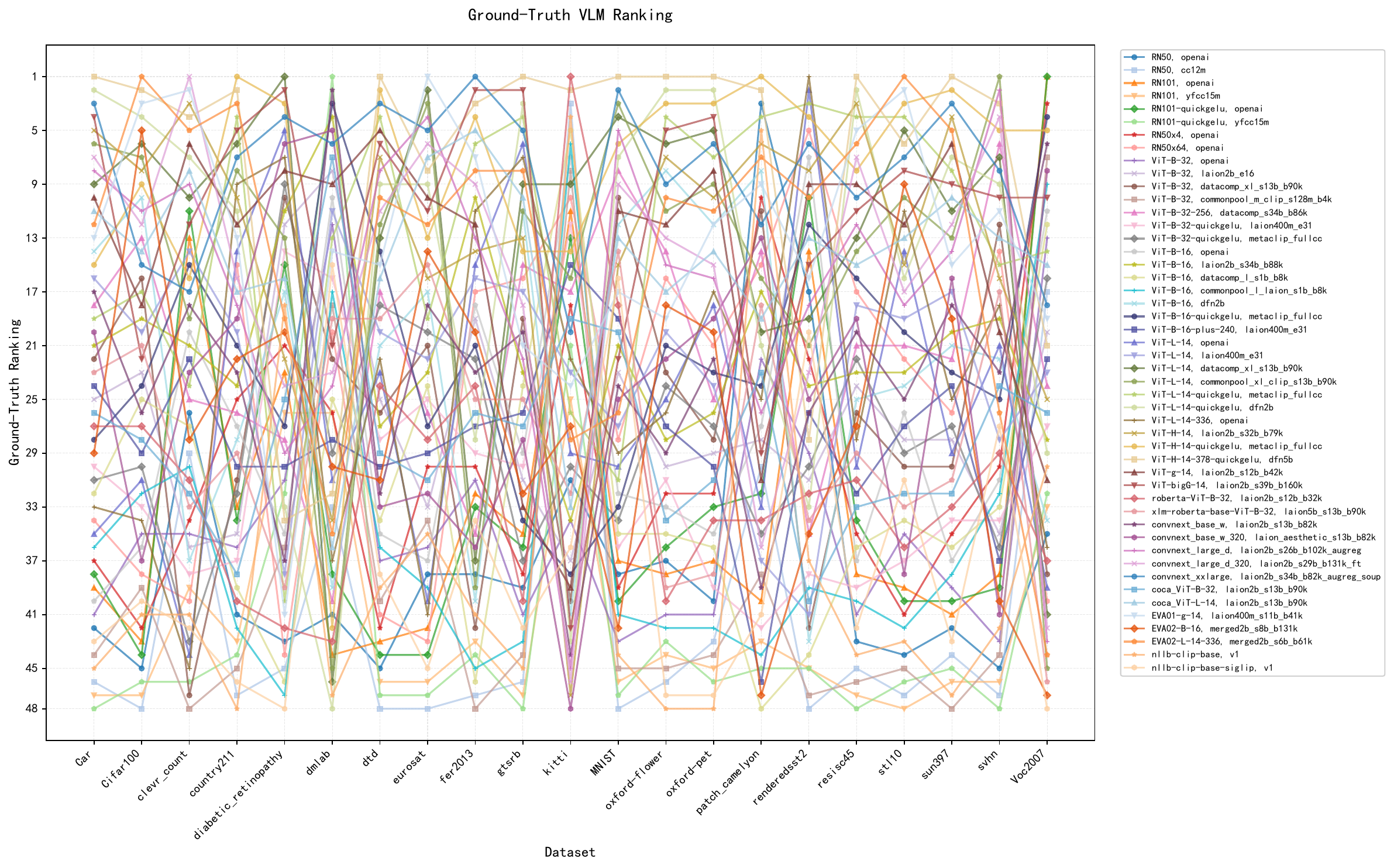}
  \caption{Ground-truth model ranking across all datasets. The ranking order varies across datasets.}
  \label{fig:ground}
\end{figure*}

\end{document}